\documentclass{article}

\usepackage{microtype}
\usepackage{graphicx}
\usepackage{subcaption}
\usepackage{booktabs} % for professional tables

\usepackage{hyperref}

\usepackage[main, preprint]{neurips_2026}
\usepackage{natbib}
\usepackage{amsmath}
\usepackage{amssymb}
\usepackage{mathtools}
\usepackage{amsthm}

\usepackage[capitalize,noabbrev]{cleveref}

\theoremstyle{plain}
\newtheorem{theorem}{Theorem}[section]
\newtheorem{proposition}[theorem]{Proposition}
\newtheorem{lemma}[theorem]{Lemma}
\newtheorem{corollary}[theorem]{Corollary}
\theoremstyle{definition}
\newtheorem{definition}[theorem]{Definition}
\newtheorem{assumption}[theorem]{Assumption}
\theoremstyle{remark}
\newtheorem{remark}[theorem]{Remark}

\usepackage{wrapfig}
\usepackage{listings}
\usepackage{xcolor}

\usepackage{url}
\newcommand{\Real}{\mathbb R}

\newcommand{\cT}{\mathcal{T}}

\newcommand{\ex}{{\mathbb E}}

\newcommand{\g}{\lambda}                %%
\newcommand{\Th}{\Theta}                %%

\newcommand{\bs}{{\bf s}}               %%
\newcommand{\bu}{{\bf u}}               %% bold u
\newcommand{\bx}{{\bf x}}               %% bold x (vector)

\newcommand{\bp}{{\bf p}}
\newcommand{\bq}{{\bf q}}

\newcommand{\by}{{\bf y}}               %% bold y (vector)
\newcommand{\bz}{{\bf z}}               %% bold z (vector)

\newcommand{\diag}{\mbox{\sf Diag}}
\usepackage[textsize=tiny]{todonotes}

\title{Causal-Aware Foundation-Model for Bilevel Optimization in Discrete Choice Settings}

\author{%
  Shivaram Subramanian, Zhengliang Xue, Markus Ettl, Yingdong Lu, Jayant Kalagnanam\thanks{
  IBM T.J. Watson Research Center\\
  Yorktown Heights, New York, 10598, U.S.A. \\
  \texttt{\{subshiva, zxue, msettl, yingdong, jayant\}@us.ibm.com} }\\
}

\begin{document}

\maketitle

\begin{abstract}
We introduce a causal-aware foundation-model framework for real‑time optimal decision making in discrete‑choice environments. We propose a constrained triple-head price optimization (C3PO) network to solve a bi‑level decision problem in which a service provider selects an optimal assortment while heterogeneous users make personalized acceptance or rejection choices optimizing their own personalized preferences. C3PO integrates imitation learning of prices, multi-task learning of revenue responses, and in-context learning of price elasticity to generate pricing recommendations while adhering to business constraints. During inference, frontier model prompting retrieves an enhanced elasticity prior for new products from behavioral economics literature, improving pricing effectiveness. We demonstrate strong in-context learning performance using simulated, synthetic, and real-world datasets. C3PO is trained on simulated data generated from multiple classical discrete choice models in economics. The model is trained on data comprising simulated customer segments and counterfactual action–outcome pairs and evaluated on randomly generated choice environments with no access to the underlying preference structure. The trained model consistently improves the pricing KPIs, with gains increasing as customer price sensitivity increases. We also deploy the tuned foundation model for optimal pricing in real-world applications such as healthcare, tender‑pricing, airline ancillary pricing, and other domains, achieving substantial gains across multiple products, markets, and divisions.
\end{abstract}

\section{Introduction}

Discrete choice models (DCMs) are widely used in forecasting, assortment planning, pricing, marketing and other decision making processes across various industries such as transportation, restaurant and food service, retail, and entertainment when customers select among a finite set of products or services.
DCMs naturally capture a bi-level decision process well suited to competitive settings between a firm's offering and  customers' selection. From buyers' perspective, individual consumers evaluate the available options and select (or decline) items based on their own utility, driven by factors such as willingness to pay, perceived quality, and personal preferences. Meanwhile, the seller strategically sets the adjustable attributes of the offer set (for example, prices) to influence customer choices and optimize key performance metrics such as expected revenue or profit margin.
The decision-making literature has traditionally emphasized domain-specific models tailored to particular applications, rather than pursuing a universal model capable of capturing the common structure underlying diverse tasks and general decision strategies. This reliance on specialized models arises in part from the fact that real‑world decisions are shaped by complex human behavior and tacit knowledge that are difficult to capture with a single predictor or even an ensemble of traditional modeling approaches.

This paper aims to overcome these limitations by introducing a method for generating optimal actions that adapt to product types, customer heterogeneity, diverse constraints, and probabilistic objectives through a foundation model for decision making (FMDM) built on transformer architectures. We design a new FMDM architecture is designed to overcome the decision-making limitations of leveraging pre-trained tabular foundation models for prediction such as  Tabular Prior-Data Fitted Network (TabPFN)(\cite{hollmann2022tabpfn}) that rely solely on imitation learning to predict unconstrained optimal actions. 

Whereas the focus of this paper is on pricing as an exemplar of discrete-choice based decision-making, the proposed approach is applicable beyond pricing to many other problem areas such as optimal product design, assortment optimization, transportation planning, and marketing, among others.

We first validate our approach in controlled settings using synthetic datasets generated from well-established ground-truth discrete choice models. Next, we apply the FMDM to approximately solve a complex stochastic optimization problem arising in multiple real-world settings such as competitive tender pricing for medical device supply, and airline ancillary pricing, illustrating how the model can enhance pricing strategies for major industry participants. Finally, we benchmark the FMDM recommendations against those obtained from traditional approaches and highlight the relative strengths and limitations of each approach.

\section{Related Work}
\paragraph{Discrete Choice Models.}
\citep{mcfadden1974conditional,mcfadden1974measurement}
introduced the random utility framework and conditional logit model, thus laid the theoretical foundation of discrete choice modeling for which D.  
McFadden was awarded the Nobel Prize in Economics in 2000. This fundamental theory is enriched by~\citep{anderson1992discrete} where the discrete choice theory of product differentiation was developed, and it provides a unified framework for modeling horizontal and vertical product differentiation that addresses the independence of irrelevant alternatives (IIA) problem. \citep{gallego2014multiproduct} derived the optimality conditions for maximizing expected margin for multinomial and nested logit models, which can be leveraged to calculate the optimal prices using a fixed point search. The work by \citep{zhang2018multiproduct} extended this result to pricing under generalized extreme value models (GEV). Data driven and nonparametric approach for decision-making in discrete choice models are pursued in~\citep{farias2009data,farias2013nonparametric}, and extended to Markov setting in~\citep{blanchet2016markov}.

Discrete choice models underpin a broad range of practical decision-making applications, including air travel demand forecasting and revenue management, ~\citep{garrow2016discrete}, personalized bundle pricing, ~\citep{xue2016pricing}, joint pricing and customer segmentation under business constraints via model‑free column generation, \citep{subramanian2022constrained}, and for multi-product revenue optimization across product lines, \citep{gallego2024bounds}. 
Related efforts in the vehicle routing domain include the work of \citep{mo2023predicting}  where the authors train a pair‑wise attention pointer network to infer real‑world driver routing behavior. Among other things, this network captures tacit driving hacks that are not readily modeled in classical vehicle routing formulations. 

\paragraph{Code Generation Models for Policy Learning.}
Recent code generation models such as CodeT5 \citep{wang2021codet5} and StarCoder \citep{li2023starcoder} have been cited as having the ability to translate complex business constraints and optimization objectives into executable policy functions, exploiting their inherent understanding of conditional logic and structured decision rules essential for discrete choice policy generation.

\paragraph{Tabular Foundation Models.}
TabPFN \citep{hollmann2022tabpfn,hollmann2025accurate}, building on the PFN framework \citep{muller2021transformers}, demonstrates that transformers pre-trained on synthetic data can achieve state-of-the-art (SOTA) performance on tabular classification tasks. However, these models focus exclusively on prediction.

\section{Problem Formulation}
\label{sec:formulation}

Consider a marketplace where a product/service provider (seller) offering multiple options to buyers who can select one of these choices to maximize their own utilities. Here are the assumptions and notations for the subsequent usages:
\begin{itemize}
\item 
There are $K$ products offered by the seller, indexed by $k = 1, \dots,K$. Accordingly, a customer's choice is characterized by $\by = (y_{1}, \ldots, y_{K}) \in \mathbb{R}^{K} $, identifying the purchased product $y$, satisfying $\sum_{k=1}^{K} y_{k} = 1$.
\item
Seller's decisions are also characterized by a vector $\bp=(p_1, p_2, \dots, p_K) \in \mathbb{R}^{K}$, a representative example is a price vector, associated with the revenue vector ${\bf r}=(r_1,r_2, \ldots, r_K)$, which will the one considered in this paper.
\item
There are $M$ observable, non-decision product features, measured either as real-valued variables or elements of finite lattices (e.g., binary indicators). These numeric and categorical attributes are denoted by $\bx = (x_{1}, \ldots, x_{M})\in \mathbb{R}^{M}$ for any purchased record.
\item
There are $L$ observable customer attributes, denoted by $\bz_i = (z_{i,1}, \ldots, z_{i,L})\in \mathbb{R}^{L}$ for each customer (or customer segment) $i = 1,2,\ldots,I$.
\item
There are $N$ historical transaction data indexed by $n = 1,\ldots,N$. Each transaction is represented by $(\bp_n, \bx_n, \by_n, \bz_n)$.
\end{itemize}

Given the $K$ choices, a customer's utilities are denoted by 
$\bu_i = (u_{i,1}, \ldots, u_{i,K})$, with corresponding choice probabilities 
$\bq = (q_{1}, \ldots, q_{K})$ based on the distribution of customer preferences. For example, the Multinomial Logit (MNL) model assumes that the random components of utility %($\boldsymbol{\epsilon}_i$) 
are independently and identically distributed (i.i.d.) according to a Gumbel 
distribution. Moreover, the default choice $0$ can be added, which is often interpreted as the baseline option, for example, it could refer to the selecting economy seats on a commercial flight relative to business, premium, or first class; or the no-purchase option, typically not having observed data for price or product features. 

Given these variables, a multi-product pricing problem can be formulated as maximizing the expected revenue for each transaction, subject to feasible prices in an action space $\Pi$:
\begin{equation}\label{eq:Max_rev}
    \max_{\bp_n \in \Pi} \; \bp_n \cdot \bq^{\top}(\bp_n, \bx_n, \bz_n).
\end{equation}

Here, the probability of discrete choices $\bq(\bu_i)$ depends on the utility 
$\bu_i(\bp_n, \bx_n, \bz_n)$,  a function of the decision variables, 
non-decision features and customer attributes. Consequently, the seller's optimal decisions depend on 
the non-decision features, for example through a pricing policy 
$\bp^{*}_n(\bx_n,\bz_n)$.

One viable approach to combining discrete choice models with machine learning techniques is to approximate the utility function $\mathbf{u}(\bx, \bz)$ using a neural network, and then obtain choice probabilities through a softmax operation, as shown in~\cite{doi:10.1287/mnsc.2023.02189}. Specifically,~\cite{doi:10.1287/mnsc.2023.02189} treat the utility function as a random field and approximate it using a system of neural networks. Their key technical result shows that, for any discrete choice model, there exists a neural network approximation whose induced choice probabilities are arbitrarily close (in total variation distance) to those of the original model for any instance. 
In the context of FMDM, however, the goal is more ambitious: the model should not depend on any particular underlying choice model. Instead, a desirable theoretical guarantee is that, for any discrete choice model, the probabilities it generates can be approximated by those produced by the foundation model, or equivalently, that the induced utility functions are themselves universally approximable; see~\citep{yuan2023power}.
FMDMs is then utilized to solve bi-level optimization~\eqref{eq:Max_rev} with the utilities, hence the probabilities, minimizes the loss function of the component of the FMDM for prediction. 
While we utilize the pricing domain for clarity of exposition, the proposed FMDM framework is domain-agnostic and readily extensible to diverse decision-making tasks and action spaces. 

\section{Foundation Model for Decision Making under Discrete Choice}
Next, we examine modeling approaches for optimizing decision-making under discrete choice.

\subsection{Model Architecture}
\label{section::label_arch}

Figure~\ref{fig:architecture} illustrates the overall architecture of FMDM, which maps customer attributes and historical price–revenue contexts to a constrained price prediction via a hierarchical transformer with in-context learning.
The model receives two primary inputs: a vector of customer features and a set of context samples $(p_i,r_i)$, where $p_i$ denotes a historical price vector and $r_i$ the corresponding observed outcome (revenue). An optional elasticity signal, derived from economic literature, is incorporated at inference time to add domain knowledge or externally derived priors. This signal is treated as a control input rather than a learned feature.

\begin{figure}[htbp!]
    \centering
    \includegraphics[width=\linewidth]{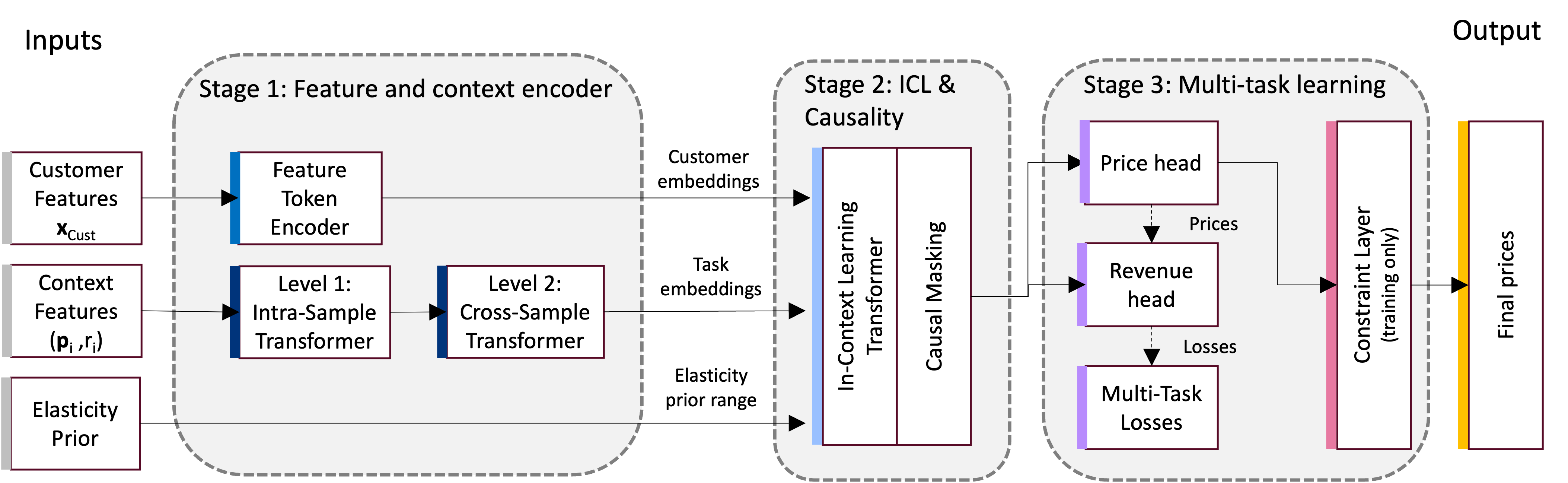}  
    \caption{High-level C3PO model architecture.}
    \label{fig:architecture}
\end{figure}

We present C3PO (Constrained triple-head price optimization), an architecture for a FMDM that combines three distinct learning paradigms. (i) Imitation learning with supervised price labels: the model imitates an oracle pricing policy via regression on labeled optimal prices; (ii) In-context learning with inference-time elasticity priors: at inference time, labeled samples are pre-pended to the batch and processed by a causal transformer, enabling few-shot adaptation; simultaneously, an elasticity prior provides a plausible elasticity range which anchors the price predictions; and (iii) Multi-task learning of the price-revenue curve: an auxiuliary revenue head learns to predict revenue for any price vector. This head is frozen and used as a reward signal to direct the price head towards higher revenue regions. 

The imitation learning module learns a representation of the optimal prices $\textbf{p}$, while the multitask component models the objective function (revenue-price curve) $f(p)$ and its variation. The in-context learning further guides the predictions to the best regions by modeling price elasticity rather than directly tracking the gradients $f'(p)$, which boosts C3PO's ability to generalize pricing recommendations across product types and domains. It also allows the decision-maker to specify an elasticity-prior for a new product via an LLM prompt to improve the pricing effectiveness.

{\em Stage 1: Customer and Context Feature Encoding}\\[.5ex]
Customer features are embedded by a feature token encoder to produce a customer embedding. Each context sample that consists of a price vector $p_i$ and a revenue scalar $r_i$ is independently processed by an intra-sample Transformer, which processes the elements of a single price-revenue pair to capture intra sample structure. The intra sample embeddings are then aggregated by a cross sample Transformer which performs attention across the set of context samples. This hierarchical encoding yields a task embedding that captures the historical pricing environment.

{\em Stage 2: In Context Learning with Causal Masking}\\[.5ex]
Customer and task embeddings are combined and passed to an in context learning (ICL) Transformer with a causal mask. This module allows the model to condition predictions on a small number of example rows, enabling few shot learning without retraining. At inference time, a product-level elasticity prior looked up from a dictionary of textbook ranges introduces an analytical anchor price and a directional bias. The shared representation produced by the ICL Transformer feeds a price head which outputs candidate prices.

{\em Stage 3: Prediction Heads and Constraints}\\[.5ex]
The price head implements imitation learning of the price labels. In parallel, a revenue head is trained jointly to learn the mapping from any price vector to expected revenue, enabling gradient-based revenue maximization at training time. The revenue head is used only during training to support multi task objectives and is not required at inference. It acts as a differentiable environment for the price head.
The model is trained using a combination of multi task losses, including supervised price prediction, outcome modeling, and constraint related penalties. This formulation leads to representations that are operationally valid while allowing inference time enrichment to guide final decisions without modifying learned parameters.

As feasible solutions in pricing optimization must often satisfy structural constraints arising from business rules and operational requirements, 
we also introduce a differentiable constraint layer that implements a heuristic clamp-and-redistribute-violation approach that is integrated into the end of the pricing network. This final layer works well with a blackbox optimization setting such as ours by projecting the raw price outputs onto a feasible set defined by box constraints, price-ordering, and a single global linear constraint to control the average price, and see Appendix~\ref{sec:constraintTH} for related theoretical reasoning.  Global constraint violations are proportionally redistributed. The constraint layer minimizes infeasibility (e.g., box constraints,  global constraints, and inter-product ordering) as soft penalties during training. 
More rigorous differentiable constraint layer options such as OPTNET (\cite{amos2017optnet}) can be used in lieu of our lightweight approach.

\subsection{Transfer Learning in Foundation Model}

A foundation model can directly perform in-context learning (ICL), where it learns to make optimal decisions from sequences of the form $(\mathbf{x}',\, \mathbf{q}(\mathbf{x}'),\, \mathbf{p}^*(\mathbf{x}'))$, for any $\mathbf{x}'$ in the testing dataset $\mathbf{X}'$ of the size $(M + 2K) \times N'$. For this type of problems, it is common to normalize the decision variables using an appropriate baseline. For details, see Appendix~\ref{sec:normalization}. There are two possible scenarios, depending on whether the choice probabilities $\mathbf{q}'$ are retrained during transfer learning.

\subsubsection{The First Scenario}

In the first scenario, no discrete-choice observations are available in the testing dataset. In this case, transfer learning relies on statistically contextual similarity between the pre-trained data and the testing data. The transformer then leverages the pre-trained optimal policy $\mathbf{p}^{\circ}(\mathbf{x}_n)$ learned from the training dataset $\mathbf{X}_{M \times N}$, represented by learned parameters in the form of matrix weights, to approximate the optimal policy $\mathbf{p}^*(\mathbf{x}')$ for any out-of-distribution $\mathbf{x}'$ that might be best quantified by a set of different learned parameters. The quality of this approximation depends on the degree of similarity between the distributions of the testing and training datasets, is an intensively studies subject in \emph{domain adaptation}, see e.g.~\cite{redko2019advances}, a topic of transfer learning with different source and target distributions but the same task. Here, we adopt some fundamental ideas and methodologies in domain adaptation, and derive some approximation error estimations specific to the problem of optimal decision making for discrete choices models.

Suppose that the pre-trained data $\mathbf{x}_n$ follows a distribution over the feature space, is usually termed \emph{source} distribution and denoted as $\pi_S$; the testing data $\mathbf{x}'$ follows a different distribution, is termed the \emph{target} distribution and denoted as $\pi_T$. Each statistical hypothesis is represented by a high dimensional matrix weight $\theta$, which means that the foundation model $\cT_\theta$ takes an input feature vector $\bx$ to the output spaces (discrete for choice probabilities and continuous for optimal pricing). More specifically, here the foundation model produces the optimal price as the solutions to the problem defined by~\eqref{eq:Max_rev} The set of all statistical hypotheses is denoted as $\Th$. The error of leaning is described by $\ex_\pi[\ell(\bx, \by;\theta)]$ where $\ell$ denotes the loss function characterize the accuracy of the hypothesis with parameter $\theta$. 
\begin{assumption}
\label{asm:likelihoodRatio}
$\pi_T$ is absolute continuous with respect to $\pi_S$, and there exists a positive number $\delta>0$, such that the likelihood ratio of $\pi_T$ with respect to $\pi_S$ is upper bounded by $1+\delta$.
\end{assumption}
\begin{remark}
Depend on the topological structure of the feature space, the likelihood can be represented as the ratio of probabilities(discrete case), or Radon-Nikodym derivative(continuous case), for details, see e,g,~\cite{rohde2014introductory}. 
\end{remark}
\begin{definition}
For a subset $A\subseteq \Th$, the $A$-divergence between two probability measures $\pi, \gamma$, denoted as $D_A(\pi, \gamma)$, is defined as $\inf_{a\in A} |\ex_\pi[\ell(\bx, \by;a)]-  \ex_\gamma[\ell(\bx,\by;a)]|$.
\end{definition}

With these preparations, we are ready to present the following theorem on the error analysis.
\begin{theorem}
\label{thm:domain_adaptation}
Suppose that the source and target distributions satisfy Assumption~\ref{asm:likelihoodRatio}, for $U$, a neighborhood of optimal matrix weight $\theta^*$ from pre-training, we have,  
\begin{align}
e_T(\theta^*)  \le & \min\{e_S(\theta^*)+\g_U+D_U(\pi_S, \pi_T),(1+\delta)e_S(\theta^*)\}
\end{align}
with $e_T(\theta)=\ex_{\pi_T}[\ell(\bx, \by;\theta)]$, $e_S(\theta):=\ex_{\pi_S}[\ell(\bx, \by;\theta)]$, and $\g_U:=\min_{\theta\in U}[e_T(\theta)+e_S(\theta)]$.
\end{theorem}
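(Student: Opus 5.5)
The plan is to prove the two bounds inside the minimum separately; the theorem then follows by taking the smaller one. Throughout I will assume, as is implicit in the setup, that the loss is nonnegative, $\ell\ge 0$. I will also assume that it satisfies a triangle-type inequality through a pairwise disagreement loss: for $\theta,\theta'\in\Th$ there is a nonnegative $\ell(\bx;\theta,\theta')$ with
\[
\ell(\bx,\by;\theta)\le \ell(\bx,\by;\theta')+\ell(\bx;\theta,\theta'),\qquad \ell(\bx;\theta,\theta')\le \ell(\bx,\by;\theta)+\ell(\bx,\by;\theta').
\]
This holds, for instance, for $0$--$1$ loss, absolute loss, or any loss induced by a metric on the output space. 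This second assumption is not stated in the paper, and the argument for the first bound depends on it.

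\emph{The bound $(1+\delta)e_S(\theta^*)$.} This is a change of measure. By Assumption~\ref{asm:likelihoodRatio}, $\pi_T\ll\pi_S$, so we may write $e_T(\theta^*)=\ex_{\pi_S}\!\left[\ell(\bx,\by;\theta^*)\,\tfrac{d\pi_T}{d\pi_S}\right]$, using the ratio of probabilities or the Radon--Nikodym derivative as in the Remark. The likelihood ratio is bounded by $1+\delta$ and $\ell\ge0$. Hence the integrand is pointwise at most $(1+\delta)\ell(\bx,\by;\theta^*)$, and integrating gives $e_T(\theta^*)\le(1+\delta)e_S(\theta^*)$.

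\emph{The bound $e_S(\theta^*)+\g_U+D_U(\pi_S,\pi_T)$.} I will follow the Ben-David style domain-adaptation argument.
\begin{enumerate}
\item Pick $b\in U$ attaining, or $\epsilon$-attaining, $\g_U=e_T(b)+e_S(b)$.
\item By the first triangle inequality, $e_T(\theta^*)\le e_T(b)+\ex_{\pi_T}[\ell(\bx;\theta^*,b)]$.
\item Replace the target disagreement by the source disagreement, paying $|\ex_{\pi_T}[\ell(\bx;\theta^*,b)]-\ex_{\pi_S}[\ell(\bx;\theta^*,b)]|$. This gap is the quantity that the $U$-divergence is meant to control.
\item Apply the second triangle inequality under $\pi_S$: $\ex_{\pi_S}[\ell(\bx;\theta^*,b)]\le e_S(\theta^*)+e_S(b)$.
\end{enumerate}
Collecting terms gives
\[
e_T(\theta^*)\le e_S(\theta^*)+\bigl(e_T(b)+e_S(b)\bigr)+\text{(gap)}=e_S(\theta^*)+\g_U+\text{(gap)}.
\]
Letting $\epsilon\to0$ completes this bound, provided the gap is at most $D_U(\pi_S,\pi_T)$.

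\emph{Main obstacle.} The hard step is identifying the gap in step 3 with $D_U(\pi_S,\pi_T)$. As defined, $D_U$ is an \emph{infimum} over $a\in U$ of $|e_{\pi_S}(a)-e_{\pi_T}(a)|$. It is therefore not automatically an upper bound for the discrepancy at the specific pair $(\theta^*,b)$.

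What I would try first is to read the divergence on the class of disagreement losses $\{\ell(\cdot;\theta^*,a):a\in U\}$ and to use that $U$ is a small neighborhood of $\theta^*$. On such a neighborhood, continuity of $\theta\mapsto e_\pi(\theta)$ lets the discrepancy at $b$ be replaced by the infimal discrepancy up to a vanishing modulus of continuity. Failing that, I would replace the infimum by a supremum over $U$, as in the standard $\mathcal H\Delta\mathcal H$-divergence. This is a harmless strengthening of the hypothesis under which steps 1--4 go through verbatim.
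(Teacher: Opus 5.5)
Your decomposition is the paper's own. The $(1+\delta)e_S(\theta^*)$ bound is by change of measure, which is correct and identical to the paper's argument. The other bound uses the same chain $e_T(\theta^*)\le e_T(\theta')+e_T(\theta^*,\theta')\le\cdots$, where the paper's undefined $e_\pi(\theta^*,\theta')$ is your disagreement term. The step you flag is exactly where the paper's proof also breaks: it simply writes $|e_T(\theta^*,\theta')-e_S(\theta^*,\theta')|\le D_U(\pi_S,\pi_T)$, which an infimum over $U$ cannot justify.

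No repair that keeps the infimum can work, because the first bound, and with it the minimum, is false as stated. Take three points $x_1,x_2,x_3$ with deterministic labels and $0$--$1$ loss, which satisfies both of your triangle inequalities. Let $\pi_S=(0.1,0.2,0.7)$ and $\pi_T=(0.6,0.2,0.2)$; the likelihood ratio is at most $6$, so Assumption~\ref{asm:likelihoodRatio} holds with $\delta=5$. Let $U=\Th=\{\theta^*,a\}$, with $\theta^*$ wrong only at $x_1$ and $a$ wrong only at $x_2$. Then:
\begin{itemize}
\item $\theta^*$ is the unique minimizer of $e_S$, with $e_S(\theta^*)=0.1$ and $e_T(\theta^*)=0.6$;
\item $e_S(a)=e_T(a)=0.2$, so $\lambda_U=0.4$ and $D_U(\pi_S,\pi_T)=0$, attained at $a$;
\item the theorem therefore asserts $e_T(\theta^*)\le\min\{0.5,0.6\}=0.5$, which is false.
\end{itemize}
Interpolating $h_\theta=(1-\theta)h_{\theta^*}+\theta h_a$ on $\Th=[0,1]$ with absolute loss reproduces the same numbers for a connected parameter set. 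The infimum is achieved by a hypothesis that says nothing about $\theta^*$. The gap your step~3 actually needs, at the pair $(\theta^*,a)$, is $\pi_T(\{x_1,x_2\})-\pi_S(\{x_1,x_2\})=0.5$.

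This also disposes of your repair (a). Read on the disagreement class $\{\ell(\cdot;\theta^*,a):a\in U\}$ with an infimum, the divergence is identically $0$. The reason is that $a=\theta^*\in U$ gives $\ell(\bx;\theta^*,\theta^*)\equiv 0$ for the losses you list. The claim would then read $e_T(\theta^*)\le e_S(\theta^*)+\lambda_U$, which the example violates. A continuity argument can only add a modulus-of-continuity term, which is a different statement. That term is small only as $U$ shrinks to $\theta^*$, where $\lambda_U\to e_T(\theta^*)+e_S(\theta^*)$ and the bound becomes vacuous.

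Repair (b) is the right one. With $D_U$ replaced by $\sup_{a\in U}\bigl|\ex_{\pi_T}[\ell(\bx;\theta^*,a)]-\ex_{\pi_S}[\ell(\bx;\theta^*,a)]\bigr|$, your steps 1--4 are a complete proof; in the example the first bound becomes $1.0$. But it is not a harmless strengthening of a hypothesis. It enlarges a quantity in the conclusion, so it proves a weaker theorem, and the counterexample forces the change.

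It must also be taken over the label-free disagreement losses. Taking the supremum of the paper's labelled quantity $|e_S(a)-e_T(a)|$ instead would make the bound trivial (take $a=\theta^*$) and $\lambda_U$ superfluous. What your argument, like the paper's, genuinely establishes is the $(1+\delta)e_S(\theta^*)$ bound together with the supremum version of the first bound. The theorem with $D_U$ as defined does not follow.
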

\begin{remark}
Theorem~\ref{thm:domain_adaptation} is a typical result in domain adaptation, and its novelty lies at its connection with normalized price range, and for the two-level LLM architecture, as well as the incorporation of the likelihood which introduce a multiplicative factor instead of the additive ones in the literature. 

The definition of $A$-divergence is apparently a variation of the $H\Delta H$ divergence that is introduced in~\cite{ben2010theory} and has been widely studied in domain adaption research. This definition allows us to specify a focus subset of hypotheses for the optimal pricing problem~\eqref{eq:Max_rev}. As we examine the potential hypotheses which are reflected as learned matrix wrights in the foundation model, we can restrict the analysis to a neighborhood $U$ of the optimal weighted trained with the source data. Potentially, such restriction allows us to get better estimations of the generalization error by utilizing well-documented ``flat minimum'' phenomenon for stochastic gradient descent, see~\cite{keskar2017large,jiangfantastic,singh2025avoiding,dziugaite2018entropy}. More precisely, A proper selected neighborhood may produce a small $D_U$-divergence between the source and target distributions, thus smaller generalization error.
Furthermore, the bound of the likelihood ratio characterizes relations between sensitivities of demand to price, thus can be viewed as a relative quantification of the price elasticity. In this sense, Theorem~\ref{thm:domain_adaptation} produces a very specific domain adaptation theory for optimal decision making for discrete choice models.
\end{remark}

\subsubsection{The Second Scenario}

In the second scenario, the testing dataset contains either (i) choice probabilities already estimated by the training pipeline or (ii) a small number of discrete-choice observations.

\noindent 
{\bf Choice Probabilities: }First, we address the accuracy of the choice probability estimation. The choice probability vector $\mathbf{q}'$ is estimated using $(\mathbf{x}', \mathbf{p}')$ (or normalized prices $\tilde{\mathbf{p}}'$) where $\mathbf{p}' = \mathbf{p}(\mathbf{x}')$ or $\tilde{\mathbf{p}}' = \tilde{\mathbf{p}}(\mathbf{x}')$ is generated using either a predicted price $\mathbf{p}^{\dagger}$ or a near-optimal price $\mathbf{p}^{\circ}$ obtained from the first scenario. Therefore, the error of estimation can be quantify quantified by Theorem~\ref{thm:domain_adaptation} with the prediction function, a foundation model, maps the feature inputs to probability vector defined over the possible choices.

\noindent 
{\bf Optimal Pricing: }
In the foundation model, both the contextual information and the likelihood of discrete choices are incorporated. Thus, the optimal policy $\mathbf{p}^*(\mathbf{x}', \mathbf{q}'(\mathbf{x}'))$ is inferred via transfer learning from the pretrained policy $\mathbf{p}^{\bullet}(\mathbf{x}_n,\, \mathbf{q}(\mathbf{x}_n))$.
The incorporation of either choice probabilities in the training or a small number of 
observations enriches probabilistic characterizations of target domain, and thus naturally narrows the difference between the source and target domains, hence, 
\begin{lemma}
Suppose that ${\tilde \pi}_T$ is the target distribution has more labels, we have $D_A(\pi_S, {\tilde \pi}_T)\le D_A(\pi_S, \pi_T)$.
\end{lemma}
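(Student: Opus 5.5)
The statement is informal, so the first step is to fix a precise reading under which it can be proved. I will model ``having more labels'' as follows. The enriched target $\tilde\pi_T$ is a mixture
\[
\tilde\pi_T=(1-\alpha)\pi_T+\alpha\,\pi_S,\qquad \alpha\in[0,1].
\]
The idea is that the added labelled observations, namely the estimated choice probabilities or the few discrete-choice records produced by the training pipeline, are generated by the same mechanism as the source. More generally, they move the target loss expectations toward the source ones. The loss $\ell(\bx,\by;a)$ is unchanged, and $D_A$ is the $A$-divergence of the Definition above.

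The core computation is short. Fix any $a\in A$. Since expectation is linear in the measure,
\[
\ex_{\tilde\pi_T}[\ell(\bx,\by;a)]-\ex_{\pi_S}[\ell(\bx,\by;a)]
=(1-\alpha)\bigl(\ex_{\pi_T}[\ell(\bx,\by;a)]-\ex_{\pi_S}[\ell(\bx,\by;a)]\bigr).
\]
Taking absolute values gives
\[
|\ex_{\pi_S}\ell-\ex_{\tilde\pi_T}\ell|=(1-\alpha)\,|\ex_{\pi_S}\ell-\ex_{\pi_T}\ell|\le |\ex_{\pi_S}\ell-\ex_{\pi_T}\ell|
\]
for every $a$. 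Taking the infimum over $a\in A$ on both sides preserves the inequality and yields $D_A(\pi_S,\tilde\pi_T)\le(1-\alpha)D_A(\pi_S,\pi_T)\le D_A(\pi_S,\pi_T)$.

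Two checks are needed. The expectations must be finite, which I will assume; this is the integrability already implicit in Theorem~\ref{thm:domain_adaptation}. The enriched target must also still satisfy Assumption~\ref{asm:likelihoodRatio}. For the mixture, the likelihood ratio is $(1-\alpha)\,d\pi_T/d\pi_S+\alpha\le 1+(1-\alpha)\delta$. So the assumption holds, and the bound of Theorem~\ref{thm:domain_adaptation} even improves.

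The main obstacle is the modelling step, not the algebra. The lemma holds trivially under the mixture interpretation. It can fail if ``more labels'' is taken to mean an arbitrary conditioning of $\pi_T$, since conditioning can move loss averages away from the source. If a more general reading is wanted, I would try to prove it under a monotonicity hypothesis: for each $a\in A$, the map from target to $|\ex_{\pi_S}\ell-\ex_{\cdot}\ell|$ does not increase when labelled data are added. The inequality then follows by the same infimum argument. I would state explicitly in the proof that the lemma is established under this condition, with the mixture model as the canonical instance.
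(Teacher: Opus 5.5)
The paper gives no proof of this lemma. Its only support is the preceding sentence, that added choice probabilities or observations ``enriches probabilistic characterizations of target domain, and thus naturally narrows the difference between the source and target domains.'' So your proposal supplies the formalization the paper omits, and as a conditional statement it is correct. Under $\tilde\pi_T=(1-\alpha)\pi_T+\alpha\pi_S$, the gap for each $a$ is scaled by $1-\alpha\ge 0$, so the infimum factors and you get the equality $D_A(\pi_S,\tilde\pi_T)=(1-\alpha)D_A(\pi_S,\pi_T)$. Your bound $d\tilde\pi_T/d\pi_S\le 1+(1-\alpha)\delta$ is also right. What your version adds is an explicit hypothesis; what it exposes is that this hypothesis carries the entire content of the lemma.

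Mixing in $\pi_S$ wholesale does not model either mechanism the paper describes, and under a more faithful model the inequality can fail. Extra target observations are draws from $\pi_T$, so at the population level they leave $\pi_T$ unchanged; the lemma then holds with equality and says nothing. Choice probabilities ``estimated by the training pipeline'' attach source-style labels to \emph{target} features. So $\tilde\pi_T$ should keep the $\pi_T$ feature marginal, with conditional $(1-\alpha)\pi_T(\cdot\mid\bx)+\alpha\,\pi_S(\cdot\mid\bx)$.

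For that hybrid, take real-valued $\bx,\by$ and $\ell(\bx,\by;a)=\bx+\by$ for every $a\in A$. Let $\ex_{\pi_S}[\bx]=0$ and $\ex_{\pi_S}[\by\mid\bx]=0$, and let $\ex_{\pi_T}[\bx]=1$ and $\ex_{\pi_T}[\by\mid\bx]=-1$ for all $\bx$. Finitely supported joint distributions on a common support satisfy Assumption~\ref{asm:likelihoodRatio}. Then $D_A(\pi_S,\pi_T)=0$ while $D_A(\pi_S,\tilde\pi_T)=\alpha>0$.

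So the lemma can only be stated conditionally, as you intend. Because the paper's $D_A$ is an infimum, your monotonicity hypothesis need only hold along a minimizing sequence for $D_A(\pi_S,\pi_T)$, not for every $a\in A$.

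Your remark that Theorem~\ref{thm:domain_adaptation} ``even improves'' concerns the enriched-target risk. Under the mixture that risk equals $(1-\alpha)e_T(\theta^*)+\alpha e_S(\theta^*)$. It does not tighten the bound on $e_T(\theta^*)$ itself, which is what Corollary~\ref{col:domain_adaptation} asserts.
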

\vskip -0.2cm
This thus will lead to the following corollary concluding an improved bound on error estimation. 
\begin{corollary}
\label{col:domain_adaptation}
The error in the target domain can be improved to 
\begin{align}
e_T(\theta^*)  \le & \min\{e_S(\theta^*)+\g_U+D_U(\pi_S, {\tilde \pi}_T),(1+\delta)e_S(\theta^*)\}
\end{align}
\end{corollary}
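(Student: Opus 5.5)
The plan is to rerun the proof of Theorem~\ref{thm:domain_adaptation} with the refined target distribution ${\tilde \pi}_T$ in place of $\pi_T$, and then use the preceding Lemma to compare divergences. Concretely, we interpret $e_T$ in the corollary as the error under the label-enriched target ${\tilde \pi}_T$, so $e_T(\theta)=\ex_{{\tilde \pi}_T}[\ell(\bx,\by;\theta)]$. We then check that ${\tilde \pi}_T$ still satisfies Assumption~\ref{asm:likelihoodRatio}. Enriching the target with choice probabilities or a few observations only conditions or reweights it within the support of $\pi_S$, so we will \emph{assume}, as part of the setting, that its likelihood ratio with respect to $\pi_S$ remains bounded by $1+\delta$.

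The two branches of the minimum are handled separately. For the multiplicative branch, write $e_T(\theta^*)=\ex_{\pi_S}\big[\tfrac{d{\tilde \pi}_T}{d\pi_S}\,\ell(\bx,\by;\theta^*)\big]$. Since $\ell\ge 0$ and the density ratio is at most $1+\delta$, this gives $e_T(\theta^*)\le(1+\delta)e_S(\theta^*)$. This is exactly the argument behind the second term of Theorem~\ref{thm:domain_adaptation}.

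For the additive branch we follow the Ben-David-type triangle argument underlying Theorem~\ref{thm:domain_adaptation}. Pick $\bar\theta\in U$ attaining $\g_U$, and pick $a\in U$ nearly attaining $D_U(\pi_S,{\tilde\pi}_T)$. We then bound $e_T(\theta^*)\le e_S(\theta^*)+\g_U+|e_T(a)-e_S(a)|$ through the chain $e_T(\theta^*)-e_S(\theta^*)$ versus $e_T(\bar\theta)$, $e_S(\bar\theta)$. Taking the infimum over $a$ yields $e_S(\theta^*)+\g_U+D_U(\pi_S,{\tilde \pi}_T)$. Finally, the preceding Lemma with $A=U$ gives $D_U(\pi_S,{\tilde \pi}_T)\le D_U(\pi_S,\pi_T)$, so the new bound is no worse than the original one, which justifies the word ``improved.'' Combining the two branches gives the minimum.

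The main obstacle is the additive branch. The $A$-divergence is defined as an \emph{infimum} over hypotheses rather than a supremum, so the triangle step relating $\theta^*$ to the witness $a$ does not follow from the definition alone. To close the gap we would invoke the same mechanism used in proving Theorem~\ref{thm:domain_adaptation}: a loss-Lipschitz or flat-minimum argument on the neighborhood $U$ showing that $|e_T(\theta)-e_S(\theta)|-|e_T(a)-e_S(a)|$ is absorbed into $\g_U$ for $\theta\in U$. If that step fails, the fallback is to read $\g_U$ as also covering this oscillation. In either case the corollary is simply Theorem~\ref{thm:domain_adaptation} applied to ${\tilde \pi}_T$, and no new estimates are needed beyond the Lemma.
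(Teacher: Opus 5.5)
Your route is the one the paper intends. The paper gives no separate proof: the corollary is Theorem~\ref{thm:domain_adaptation} applied with ${\tilde \pi}_T$ in place of $\pi_T$, and the Lemma only certifies that the new bound is no worse. You are right that $e_T$ must be read under ${\tilde \pi}_T$, since plugging the Lemma into the old bound would go the wrong way. Your multiplicative branch is fine, given the hypothesis (which the paper also silently makes) that ${\tilde \pi}_T$ satisfies Assumption~\ref{asm:likelihoodRatio} with the same $\delta$.

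The gap is the additive branch, and your proposed repairs cannot close it. The inequality $e_T(\theta^*)\le e_S(\theta^*)+\lambda_U+|e_T(a)-e_S(a)|$, for $a$ a near-minimizer of $|e_T-e_S|$ over $U$, is false in general, and neither flatness nor Lipschitz continuity rescues it. Here is a counterexample:
\begin{itemize}
\item Take two points $x_1,x_2$ with $\pi_S=(0.1,0.9)$ and ${\tilde\pi}_T=(0.9,0.1)$, so the likelihood ratio is at most $9=1+\delta$.
\item Let $t(\theta)=\min\{|\theta|,1\}$, $\ell(x_1;\theta)=1-0.9\,t(\theta)$ and $\ell(x_2;\theta)=0.1\,t(\theta)$.
\item Then $e_S\equiv 0.1$, so $\theta^*=0$ sits in a perfectly flat source minimum. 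The loss is nonnegative and Lipschitz in $\theta$, and $e_T(\theta)=0.9-0.8\,t(\theta)$.
\item For $U=(-2,2)$, both $D_U$ and $\lambda_U$ are attained at $\theta=1$, giving $D_U(\pi_S,{\tilde\pi}_T)=0$ and $\lambda_U=0.2$.
\item The right-hand side is therefore $\min\{0.3,\,0.9\}=0.3<0.9=e_T(\theta^*)$.
\end{itemize}
So with the infimum in the definition of $D_A$, the additive branch, and hence the stated minimum, genuinely fails.

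There is no mechanism in the paper's proof of Theorem~\ref{thm:domain_adaptation} for you to borrow. Its last step simply asserts $|e_T(\theta^*,\theta')-e_S(\theta^*,\theta')|\le D_U(\pi_S,\pi_T)$. That is the same unjustified move, valid only for a supremum-type divergence as in the $H\Delta H$ framework. Your fallback of letting $\lambda_U$ absorb the oscillation changes the statement rather than proving it.

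What does work is to define $D_A$ with a supremum. Since $\theta^*\in U$ and $\ell\ge 0$ (so $\lambda_U\ge 0$), you get directly $e_T(\theta^*)\le e_S(\theta^*)+|e_T(\theta^*)-e_S(\theta^*)|\le e_S(\theta^*)+\lambda_U+D_U(\pi_S,{\tilde\pi}_T)$. The Lemma would then need its own justification for that divergence, though only to support the word \emph{improved}.
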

\vskip -0.2cm
Hence, the FMDM enables transfer learning through the features and choice probabilities obtained above
%%in~\eqref{eq:TranferLearning4Features_Probs}, 
where $\mathbf{q}'(\mathbf{x}')$ is computed using predicted or near-optimal prices.%as in~\eqref{eq:prob_by_features}. 
Consequently, a foundation model obviates the need for computationally expensive optimization of an NP-hard problem for every transaction, significantly reducing the decision-processing time. After observing the actual prices $\mathbf{p}'$ and the corresponding outcomes $\mathbf{y}'$, a more accurate probability estimation can be obtained to improve optimization when necessary. Another advantage of a FMDM system is that it can be trained or fine-tuned using one of many well-known approaches for generating synthetic tabular data (see Appendix ~\ref{appendix:Synth_Data_Gen} for more details). We demonstrate that it sufficient to train the C3PO network using simulated data generated from classical discrete choice models in order to generalize well across diverse domains, heterogeneous customer preferences, and product types.

\section{Experiments}

\subsection{Simulated Data Generation For Model Training}
\label{sec:simulated_data_gen}

Training and evaluation data for the first set of experiments are simulated using several classical discrete choice models including Multinomial Logit (MNL) ($16\%$), Nested Logit (NL) ($16\%$), Mixed MNL (MMNL) ($27\%$), Iso-elastic ($27\%$), and Linear models ($14\%$). 
For the MNL and NL datasets, optimal prices are computed via a fixed-point search using the Gallego--Wang optimality condition for the MNL model 
\citep{gallego2014multiproduct}. For other models such as mixed MNL, Iso-elastic and linear models in a multi-product setting where we do not have readily available theoretical optimality conditions, we employ nonlinear optimization search to obtain good quality price labels.

The simulations yield 13{,}000 datasets with more than 1.5 million labeled examples. 12{,}000 datasets are used to train the C3PO model. The remaining datasets are withheld for evaluation within the controlled experiment,  and 1,600 examples (13 datasets) are reserved for in-context learning (ICL). For each dataset, we randomly generate segment-specific choice parameters $(\boldsymbol{\alpha}, \boldsymbol{\beta})$. 
The price labels are generated using standard nonlinear optimization search procedures. 
Complete details about generating price labels and the what-if outcome data are provided in Appendix (\ref{appendix::label_gen} and \ref{appendix::label_training_data_and_plots}).

\subsection{Experimental Setup}
We analyze the performance of the C3PO model on a variety of synthetic and real-world datasets. In addition to the simulated datasets drawn from classical models, we compile real-world datasets from six different domains, including B2B tender-pricing, airline ancillary pricing, Amazon DVD sales (\cite{farias2013nonparametric}), Hotel room selection (\cite{newman2014estimation}), Swiss Metro ridership (\cite{bierlaire2001swissmetro}), and Yogurt (\cite{jain1994random}). 
The dataset details are given in Table \ref{tab:meta} in Appendix \ref{appendix::label_training_data_and_plots}. 
The elasticity priors displayed in the table are based on elasticity values of similar products from literature (\cite{babic2005microeconomics, mankiw1998principles}) and domain expertise (Appendix \ref{appendix::label_elas}).

The performance of the trained C3PO model is assessed using four different model-free metrics: Price Increase Recall (PIR), Price Decrease Recall (PDR), booking regret (BR), as defined in \cite{ye2018customized}, and the mean absolute error (MAE). PIR measures how effectively the pricing model identifies situations where the offered price could have been higher while still winning the bid. 
PDR measures how often the model correctly identifies cases where the offered price should have been lower in order to win the bid. These metrics enable an objective assessment of how the recommended price movements would have performed by examining actual wins and losses in the evaluation dataset. 
BR quantifies the model’s overall missed revenue due to incorrect price recommendations, combining missed upsell opportunities and avoidable losses. 
Scores greater than 0.55 for \textit{both} PDR and PIR are considered strong (assuming the legacy prices are not optimized),
whereas a high score in only one of PIR or PDR suggests unbalanced recommendations reflecting an inability to differentiate between winning and losing bids \cite{proserpio2022pricing}.
Our primary KPI is to maximize the minimum of the PDR and PIR values to generate balanced price movements.
Lastly, MAE reflects the model's imitation learning capability. 

.

\subsection{Experimental Results}

The first experiment analyzes a real‑world competitive healthcare B2B setting. The analysis draws on anonymized sales data from a medical device manufacturer that participates in public‑sector procurement processes by responding to Requests for Quotes (RFQs), or tenders. The historical data set consists of four years of tender activity with typical win rates ranging from 25 to 50 percent. The competitive nature of the bidding significantly adds to the uncertainty of winning a bid. 
Within this environment, sales representatives evaluate tenders individually, reviewing factors such as product configurations, costs, list prices, and delegated price limits, to determine a quoted price based on their personal understanding of customer behavior and market conditions. 

Given that the B2B setting represents the core discrete choice problem (single choice, win or loss) that is widely used across domains, we analyze it in depth and compare C3PO's performance to not only the the optimal policy obtained through stochastic optimization but two additional baselines. We test an off-the-shelf application of TabPFN, as well as a fine-tuned TabPFN (TabPFN-FT) for imitation-learning on the simulated MNL and NL datasets (which are well suited for this win-loss discrete-choice setting). 
We implemented the optimization approach in \cite{subramanian2022constrained} to calculate an optimal pricing policy that maximizes expected gain while satisfying all the user-specified constraints. 
Table ~\ref{tab:tender-kpi} compares the performance of these three baselines on the PDR/PIR/BR metrics.

\begin{table}[thbp]
\centering
\small
\begin{tabular}{lccc}
\toprule
\text{Model} & \text{PDR} & \text{PIR} & \text{BR} \\
\midrule
C3PO        & 0.611 & 0.673 & 0.105 \\
Baseline TabPFN        & 0.767 & 0.393 & 0.055 \\
TabPFN-FT               & 0.723 & 0.589 & 0.000 \\
Stochastic Opt. Policy & 0.646 & 0.430 & 0.041 \\
\hline\\
\end{tabular}
\caption{Comparison of C3PO against baseline models for B2B data set.}
\label{tab:tender-kpi}
\end{table}

Regarding PDR, all three baselines show solid performance. However, TabPFN-FT stands out as the only baseline that delivers consistently strong results in both PDR and PIR. The stochastic optimization method nearly reaches the desired target range as well. In contrast, the baseline off-the-shelf TabPFN model exhibits a relatively high PDR, suggesting a tendency to under‑price. The TabPFN-FT baseline outperforms even the stochastic optimization model on these metrics.

The results in Table \ref{tab:baseline} (Appendix \ref{appendix::label_training_data_and_plots})
show that the C3PO model outperforms the classical policy optimizer in the B2B and airline setting, both of which are drawn from actual client engagements. 
In particular, Table \ref{tab:tender-kpi} shows that C3PO outperforms all three baselines in the B2B setting in terms of the combined PDR-PIR, which is our primary KPI, by achieving a score of $61\%$, compared to the scores of $39\%$, $43\%$, and $59\%$ achieved by the off-the-shelf TabPFN, TabPFN-FT, and the classical optimization solver, respectively. Table \ref{tab:baseline} also shows C3PO scaling reasonably for the DVD choice set whose problem size (15) is more than twice the maximum cardinality (6) seen by C3PO in training.

As far as low-elasticity regimes, we see that both PDR-PIR numbers for the Swiss-metro dataset are acceptable and above 40\% but the model prefers to drop prices more than raising them in actual wins, which differs from the optimal pricing strategy. The Yogurt dataset is an example where the model does raise prices significantly, which makes intuitive sense for a relatively inexpensive grocery item. In general, the results show that the C3PO model trained on simulated classical choice datasets is able to generalize well for price elastic settings.

\subsection{Ablation Studies}
The C3PO architecture is designed to overcome the limitation of TabPFN in three areas, which we want to analyze through ablation studies:
1. Move beyond imitation-learning of prices to incorporate multi-task learning of the revenue objective function. 
2. Assess the impact of eliminating or reducing ICL examples. 
3. Ability to learn the elasticity for a new domain via ICL examples and the use of an elasticity prior that is available in the literature.
The ablation studies analyze the impact of each of these components of C3PO. Table \ref{tab:avg_metrics} summarizes the results.
\begin{table}[ht]
	\centering
	\small
	\caption{Average metrics across all evaluation datasets (mean $\pm$ std).}
	\label{tab:avg_metrics}
	\begin{tabular}{lcccc}
		\toprule
		Scenario & MAE & PDR & PIR & BR \\
		\midrule
		C3PO & 0.38 $\pm$ 0.11 & 0.49 $\pm$ 0.12 & 0.54 $\pm$ 0.13 & 0.10 $\pm$ 0.17 \\
		ICL-OFF & 0.66 $\pm$ 0.13 & 0.20 $\pm$ 0.16 & 0.81 $\pm$ 0.17 & 0.75 $\pm$ 0.47 \\
		CONSTRAINT-ON & 0.31 $\pm$ 0.15 & 0.43 $\pm$ 0.30 & 0.57 $\pm$ 0.30 & 0.25 $\pm$ 0.46 \\
		IMITATION-ONLY & 0.35 $\pm$ 0.06 & 0.59 $\pm$ 0.20 & 0.41 $\pm$ 0.16 & 0.03 $\pm$ 0.06 \\
		PRIOR-OFF & 0.41 $\pm$ 0.08 & 0.44 $\pm$ 0.15 & 0.58 $\pm$ 0.19 & 0.23 $\pm$ 0.30 \\
		SIMPLE-ICL & 0.40 $\pm$ 0.08 & 0.43 $\pm$ 0.14 & 0.59 $\pm$ 0.18 & 0.21 $\pm$ 0.29 \\
		\bottomrule
	\end{tabular}
\end{table}
In the first study we turn off the attention network that models the revenue-price curve and replace it with a statistical summary. This reduces C3PO to an 'imitation-only' learning model akin to TabPFN.  
The performance deteriorates when it attempts to generalize to some of the new domains as can be seen in the skewed PDR-PIR numbers, and the min(PDR, PIR) drops from C3PO's score of 0.49 to 0.41. The detailed results of this simplified network are shown in Table \ref{tab:imitation_only} in Appendix \ref{appendix::label_ablation}.

The second study evaluates the influence of ICL on prediction performance. In one setting, ICL is fully disabled (“ICL‑OFF”). In another, the number of ICL examples is reduced to 100 (“SIMPLE‑ICL”). Without ICL, the model fails to generalize effectively, with recommended prices clustering near the upper bound and the primary KPI dropping to 0.2. Introducing even a limited set of 100 ICL examples substantially mitigates this issue, improving the KPI to 0.43. Tables \ref{tab:icl_off} and \ref{tab:simple_icl} in Appendix \ref{appendix::label_ablation}) show the detailed results.

The third study examines the effect of incorporating an explicit elasticity modeling component within C3PO. As shown in Table \ref{tab:prior_off} in Appendix \ref{appendix::label_ablation}, we evaluate the benefit of introducing an elasticity prior which captures and quantifies consumers’ willingness to pay based on factors such as necessity (e.g., essential versus discretionary or luxury goods) and the availability of substitute options across price tiers. By integrating this elasticity model, C3PO becomes causally aware rather than relying solely on statistical correlations. In the absence of such a prior, model accuracy declines, with the KPI dropping to 0.44 as the system defaults to the training elasticity range of (-3.0, -1.0).

The weights for the training loss terms reflect a decision-maker’s preference for emphasizing the elasticity prior. We recommend a substantial weight for this component (0.5–0.75). In practice, a reasonably narrow elasticity range for a new domain can be derived using domain expertise grounded in an understanding of the causal drivers. Such a prior beneficially shifts the optimal price distribution relative to the training regime and generally improves accuracy without requiring retraining.

\subsection{Constraint Satisfaction}

We report on the performance of the heuristic constraint satisfaction layer described in Section \ref{section::label_arch} that works adequately for a blackbox optimization approach in a discrete choice decision-making setting. Table \ref{tab:avg_constraints} shows the constraint satisfaction percentage and average percentage violation for three common constraint types that show up in practical discrete choice settings.

\begin{table}[ht]
	\centering
	\small
	\caption{Average constraint violations across all datasets (mean $\pm$ std across evals).}
	\label{tab:avg_constraints}
	\begin{tabular}{lcccc}
		\toprule
		& \multicolumn{2}{c}{Abs Mean} & \multicolumn{2}{c}{\% Mean} \\
		\cmidrule(lr){2-3} \cmidrule(lr){4-5}
		Constraint Type & Unconstrained & Constrained & Unconstrained & Constrained \\
		\midrule
		Box & 0.19 $\pm$ 0.39 & 0.07 $\pm$ 0.16 & 9.7\% $\pm$ 19.7\% & 3.3\% $\pm$ 8.2\% \\
		Inter-Choice Ordering & 0.15 $\pm$ 0.17 & 0.06 $\pm$ 0.09 & 16.9\% $\pm$ 20.2\% & 9.8\% $\pm$ 15.5\% \\
		Average Price Goal & 0.07 $\pm$ 0.10 & 0.03 $\pm$ 0.04 & 6.0\% $\pm$ 8.1\% & 3.3\% $\pm$ 5.5\% \\
		\bottomrule
	\end{tabular}
\end{table}

Table  \ref{tab:constr_baseline_results} shows the feasibility performance without the heuristic constraint layer for each of the datasets, while Table \ref{tab:constr_on_final} shows the improved feasibility performance after activating the constraint satisfaction layer. We can see a significant reduction in infeasibility for most of the domains evaluated. The summary results table \ref{tab:avg_metrics} shows that the KPI drops from 0.49 to 0.43 after activating the constraint satisfaction layer, but is able to reduce the absolute constraint violations by more than $50\%$. A post-processing step can be applied to project the final predictions into the feasible space.

\section{Conclusions}
\label{sec:conclusions}

We propose a causal-aware foundation model for decision making (FMDM) that approximates bilevel stochastic optimization in discrete-choice settings, where heterogeneous customer preferences interact with provider-side optimization. At the core of this framework is C3PO, a constrained triple-head architecture that unifies imitation learning, multi-task revenue modeling, and in-context learning to produce scalable and domain-adaptive pricing policies.
C3PO is trained on simulated data from several classical discrete choice models that is augmented by examples of constraint-feasible counterfactual action--outcome pairs. 
A key contribution is the integration of elasticity modeling via inference-time prompting, enabling the model to incorporate economically grounded priors for unseen products without retraining. This combination allows C3PO not only to approximate the underlying stochastic optimization problem, but also to capture latent behavioral heuristics that are typically inaccessible to classical optimization frameworks.

Across both controlled simulations and diverse real-world deployments, including healthcare tender pricing and airline ancillary pricing, C3PO consistently outperforms baseline methods and classical optimization approaches, particularly in price-elastic environments where adaptability is critical. Notably, it achieves these gains while producing less conservative and more value maximizing decisions, highlighting the advantage of combining data-driven learning with causal structure.
Overall, our results suggest that causal-aware foundation models can fundamentally expand the frontier of practical optimization, offering a flexible, high-performance alternative to traditional methods in complex, high-dimensional decision environments.

\subsection*{Impact and Limitation}
\label{sec:impact_limitation}
This paper aims to advance machine learning for optimal decision‑making under uncertainty in discrete‑choice environments by leveraging  foundation models for decision making (FMDMs) using a novel constrained triple-head price optimizer architecture (C3PO). 
When applied appropriately, FMDMs can help sellers and service providers implement high‑quality, data‑driven recommendations that respond more rapidly to uncertainty and improve key performance indicators. Customers in price-sensitive segments are also expected to benefit, as service providers seeking to increase win rates often offer more competitive discounts. The C3PO framework is particularly advantageous for smaller firms that may lack the resources or specialized expertise required to build, maintain, and deploy sophisticated stochastic optimization models. 

However, C3PO can execute actions rapidly and this comes with risks if not employed judiciously, e.g., in low-elasticity settings without practical safeguards. While C3PO is trained on counterfactual actions that can satisfy constraints, including regulatory and fairness goals, it is possible that such constraints may be relaxed to boost a company's financial performance. For example, algorithmic decisions that include demographic attributes within contextual data can turn into hyper-gouging and surveillance-based pricing practices that can be disadvantageous to customer groups \cite{ftc2025surveillance}.

Real‑world implementations of FMDMs should ensure transparency and uphold all regulatory and fairness requirements embedded in the optimization frameworks used for training. Preserving these requirements helps maintain the feasibility and integrity of the actions the model is trained to replicate. We also recommend using an independent post‑processing layer to audit model outputs and verify compliance. Ultimately, practitioners must apply careful due diligence and consider the broader economic and societal implications before deploying these models at scale.

\nocite{langley00}

\bibliographystyle{unsrt}
%\bibliography{reference}

\newpage
\appendix
\onecolumn

\section{Theoretical Development}
\label{sec:theory}

\subsection{Methodologies for Decision Model under Discrete Choice}

\subsubsection{Normalization of the Decision Variables}
\label{sec:normalization}

Normalization is important both for the optimization procedure and for obtaining meaningful insights into the resulting optimal decisions, and is conceptually distinct from the feature standardization used in prediction tasks.
For example, pricing models often require negative demand elasticity, which
places structure on how normalized prices should behave. The choice of baseline
varies across pricing applications and may include cost, list price,
competitive price, etc. 
For example, in an airline dataset, we normalize the upgraded-service prices using the base economy fare $p_{0,n}$ as the benchmark, yielding normalized prices of the form
$
\tilde{p}_{k,n} = \frac{p_{k,n}}{p_{0,n}} - 1,
$
for any $k$ in the upper classes. 

\subsubsection{Price Elasticity}
\label{appendix::label_elas_basics}
Some basic facts and results that can enhance our understanding of the basic mechanism between the optimal pricing decisions and the key quantity of price elasticity, thus improve the design and execution of FMDM, are collected below.

Some basic facts in optimization can help us to understand the basic mechanism between the optimal pricing decisions and the key quantity of price elasticity. 
\begin{definition}
The demand elasticity matrix
$E\in\Real^{K\times K}$ is defined as  $E_{jk}=\frac{p_k}{q_j}\frac{\partial q_j}{\partial p_k}$ for all $j, k=1,2,\ldots K$. 
\end{definition}
\begin{remark}
The definition of demand elasticity matrix, which can also be written as $E_{jk}=\frac{\partial q_j}{q_j}/\frac{\partial p_k}{p_k}=\frac{\partial \ln q_j}{\partial \ln p_k}$ and should be viewed as scaled gradient of the demand with respect to price, and it follows the basic logic of being the ratio between changes in demand and changes in price that is well known in the economics literature, see e.g.~\cite{parkin2008economics}. The diagonal entries $E_{kk}$ are \emph{own-price elasticities} and they are nonpositive naturally. The off-diagonal entries $E_{jk}$ ($j\neq k$) are \emph{cross-price elasticities}, and they are typically positive for substitutes. Comparing to the the slope of demand curve, price elasticity is scale-invariant and dimensionless, thus is preferred for analysis of diverse product and with counterfactual actions such as ours.
\end{remark}

In the one product case, i.e. $K=1$, the price elasticity of demand takes the form of $E(p):=pq'(p)/q(p)$. Without any constraints, then it is easy to see from the optimal condition, we have $\epsilon(p^*)=-\frac{p^*}{p^*-c}$, especially, $E(p^*)=-1$ when $c=0$. In the multi-product case, several basic relationship between the optimal price and the cost structure can be derived easily from the optimality condition.

\begin{proposition}[multi-product optimality condition]
\label{thm:multiproduct}
At an interior optimum $\bp^*$, the first-order conditions in vector form are
\begin{equation}
R \diag(\bp)\bq + E(\bp^*)^\top\, \bs(\bp^*) = \mathbf{0}\,,
\label{eq:multiproduct_foc_mod}
\end{equation}
where $\bs=[s_1, s_2, \dots, s_K]^\top$, $s_k = (p_k-c_k)\,q_k / R, k=1,2,\ldots, K$ is the \emph{profit-weighted revenue share} of product $k$.
\end{proposition}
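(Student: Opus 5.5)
The plan is to differentiate the profit objective directly and then rewrite each partial derivative in elasticity form. I take the objective to be the profit version of \eqref{eq:Max_rev}, $\Pi(\bp)=\sum_{j=1}^K (p_j-c_j)\,q_j(\bp)$, with unit costs $c_j$; setting $c_j=0$ recovers the revenue problem. I take $R$ to be the normalising scalar $R=\Pi(\bp^*)$, which I assume is nonzero, since that is what makes $\bs$ a vector of shares. I also assume $\bp^*$ is an interior point of $\Pi$, that $\bq$ is differentiable there, and that $p_k^*>0$ and $q_k(\bp^*)>0$. Under these assumptions the elasticity matrix $E(\bp^*)$ is well defined and the first-order condition is simply $\nabla\Pi(\bp^*)=\mathbf 0$.

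The first step is the product rule. For each $k$ it gives
\begin{equation*}
\frac{\partial \Pi}{\partial p_k} = q_k + \sum_{j=1}^K (p_j-c_j)\frac{\partial q_j}{\partial p_k}=0 .
\end{equation*}
The second step rewrites the cross term using the definition of $E$. Since $\partial q_j/\partial p_k = E_{jk}\,q_j/p_k$, multiplying the $k$-th condition by $p_k>0$ yields
\begin{equation*}
p_k q_k + \sum_{j=1}^K E_{jk}\,(p_j-c_j)q_j = 0 .
\end{equation*}
The third step divides by $R$ and identifies $(p_j-c_j)q_j/R=s_j$. The sum then becomes $\sum_j E_{jk}s_j=(E^\top\bs)_k$, and the first term becomes the $k$-th entry of $R^{-1}\diag(\bp)\bq$. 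Stacking over $k$ gives the vector identity.

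The only real obstacle is bookkeeping: the scalar multiplying $\diag(\bp)\bq$. The derivation produces
\begin{equation*}
R^{-1}\diag(\bp^*)\bq(\bp^*) + E(\bp^*)^\top\bs(\bp^*)=\mathbf 0,
\end{equation*}
or equivalently $\diag(\bp^*)\bq + R\,E^\top\bs=\mathbf 0$. The factor $R$ in \eqref{eq:multiproduct_foc_mod} should therefore be read as a scaling of $\diag(\bp)\bq$, and it matches the derivation only if it stands for $R^{-1}$ (or, after multiplying through by $R$, as the equivalent form just displayed). If $R$ is instead meant to be total revenue $\sum_j p_jq_j$, the same computation goes through verbatim with $\bs$ rescaled accordingly. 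Either way the proof is just the calculation above, stated with the convention fixed explicitly.

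As a sanity check, I will verify that $K=1$ with $c=0$ reduces to $E(p^*)=-1$. In that case $s_1=1$ and $R=p^*q$, so the identity reads $1+E(p^*)=0$, matching the one-product statement preceding the proposition.
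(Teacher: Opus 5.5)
Your proposal is correct and follows the paper's proof essentially step for step. The product rule gives $q_k + \sum_j (p_j-c_j)\,\partial q_j/\partial p_k = 0$; substituting $\partial q_j/\partial p_k = E_{jk}q_j/p_k$ and multiplying by $p_k/R$ then yields $p_kq_k/R + \sum_j s_j E_{jk} = 0$, exactly where the paper's own derivation ends. Your bookkeeping remark is therefore right: the leading factor in the displayed vector identity should be $R^{-1}$, equivalently $\diag(\bp^*)\bq + R\,E^\top\bs = \mathbf{0}$, not $R$.
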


\begin{proof}
Setting $\partial R/\partial p_k=0$ for each $k=1,2,\ldots, K$:
\begin{equation}
q_k + \sum_{j=1}^{K}(p_j-c_j)\frac{\partial q_j}{\partial p_k} = 0\,.
\label{eq:foc_k}
\end{equation}
Thus, 
\begin{align*}
q_k + \sum_{j=1}^{K}\frac{(p_j-c_j)\,q_j}{p_k}\cdot\underbrace{\frac{p_k}{q_j}\frac{\partial q_j}{\partial p_k}}_{E_{jk}} = 0
\end{align*}
Equivalently, 
\begin{align*}
q_k+\sum_j \frac{(p_j-c_j)\,q_j}{R}\cdot\frac{R}{p_k}\cdot E_{jk}=0\,.
\end{align*}
This implies
\begin{align*}
\frac{p_kq_k}{R}+\sum_j \frac{(p_j-c_j)\,q_j}{R}\cdot E_{jk}=0\,.
\end{align*}
Equation~\eqref{eq:multiproduct_foc_mod} follows. 
\end{proof}

\subsubsection{MNL Closed Form and the IIA Structure}

In MNL with utilities $V_k=\alpha_k-\beta p_k$, i.e. price sensitivity is common across all products and outside option $V_0=0$, we have, 
\begin{align*}
q_k(\bp) = \frac{e^{\alpha_k-\beta p_k}}{1+\sum_j e^{\alpha_j-\beta p_j}}\,.
\end{align*}
A direct consequence is that, 
\begin{proposition}[MNL elasticity structure]
\label{prop:mnl_elast}
\begin{align}
E_{kk} &= -\beta\,p_k\,(1-q_k)\,, \label{eq:mnl_own}\\
E_{jk} &= +\beta\,p_k\,q_k \quad (j\neq k)\,. \label{eq:mnl_cross}
\end{align}
\end{proposition}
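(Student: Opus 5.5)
The plan is a direct computation of the partial derivatives of the MNL choice probabilities, which are then substituted into the definition $E_{jk}=\frac{p_k}{q_j}\frac{\partial q_j}{\partial p_k}$. I will use logarithmic differentiation to keep the algebra short. Write $D(\bp)=1+\sum_{i} e^{\alpha_i-\beta p_i}$, so that
\begin{align*}
\ln q_j = \alpha_j-\beta p_j - \ln D(\bp).
\end{align*}
Since $E_{jk}=\frac{\partial \ln q_j}{\partial \ln p_k}=p_k\,\frac{\partial \ln q_j}{\partial p_k}$ (as noted in the remark following the definition of the elasticity matrix), it suffices to compute $\partial \ln q_j/\partial p_k$.

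The key intermediate fact is
\begin{align*}
\frac{\partial \ln D}{\partial p_k}=\frac{-\beta e^{\alpha_k-\beta p_k}}{D}=-\beta q_k.
\end{align*}
This holds because only the $k$-th summand of $D$ depends on $p_k$; the outside option contributes the constant $1$.

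From it I get $\frac{\partial \ln q_j}{\partial p_k}=-\beta\,\mathbf{1}\{j=k\}+\beta q_k$. I then split into two cases.

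\emph{Case $j=k$:} the derivative is $-\beta+\beta q_k=-\beta(1-q_k)$. Multiplying by $p_k$ gives \eqref{eq:mnl_own}.

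\emph{Case $j\neq k$:} the derivative is $\beta q_k$. Multiplying by $p_k$ gives \eqref{eq:mnl_cross}.

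I will also note the consequences. $E_{jk}$ is independent of $j$ for $j\neq k$, which is the IIA signature. With $\beta>0$, the own-price elasticities are nonpositive and the cross-price elasticities are nonnegative, consistent with the remark on substitutes.

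There is no genuine obstacle here. The only point needing care is to use the common price sensitivity $\beta$ and the normalization $V_0=0$. The outside option enters $D$ as the constant $1$ and drops out on differentiation. Every $q_k$ is strictly positive, so dividing by $q_j$ in the definition of $E$ is legitimate.
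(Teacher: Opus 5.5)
Your proposal is correct and is essentially the paper's proof. The paper quotes the standard MNL derivatives $\partial q_k/\partial p_k=-\beta q_k(1-q_k)$ and $\partial q_j/\partial p_k=\beta q_j q_k$ and multiplies by $p_k/q_j$. Your logarithmic differentiation via $\partial \ln D/\partial p_k=-\beta q_k$ derives those same derivatives (divided by $q_j$), so it is the same computation with the derivative step written out.
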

\begin{proof}
$\partial q_k/\partial p_k = -\beta\,q_k(1-q_k)$ and $\partial q_j/\partial p_k = \beta\,q_j\,q_k$ for $j\neq k$ (standard MNL derivatives). Multiplying by $p_k/q_k$ and $p_k/q_j$ respectively.
\end{proof}
Furthermore, 
\begin{theorem}[MNL multi-product optimal price]
\label{thm:mnl_opt}
The revenue-maximizing price for product $k$ satisfies
\begin{equation}
p_k^* = \frac{1}{\beta(1-q_k^*)}\,.
\label{eq:mnl_pstar}
\end{equation}
\end{theorem}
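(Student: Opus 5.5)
The plan is to read the statement as the own-revenue optimality condition for product $k$. Here $p_k^*$ maximizes the revenue $p_k\,q_k(\bp)$ earned from product $k$, with the other prices $p_j$, $j\neq k$, held at their optimal values. This is the per-product analogue of the single-product fact $E(p^*)=-1$ recalled above for $c=0$. The proof then reduces to Proposition~\ref{prop:mnl_elast}. One caveat: under this reading the cross-price terms $E_{jk}$ of Proposition~\ref{thm:multiproduct} do not enter the stationarity condition for $p_k$. Only the own-price elasticity $E_{kk}$ does, and that is exactly what produces $1-q_k^*$ rather than an aggregate share in the denominator.

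\emph{Step 1 (stationarity).} Fix $p_j$ for $j\neq k$ and differentiate $p_k q_k$ with respect to $p_k$:
\begin{align*}
\frac{\partial (p_k q_k)}{\partial p_k} = q_k + p_k\frac{\partial q_k}{\partial p_k} = q_k\bigl(1+E_{kk}\bigr).
\end{align*}
Since $q_k>0$, an interior optimum requires $E_{kk}(\bp^*)=-1$. This is the $K$-product, own-price specialization of the first-order condition in Proposition~\ref{thm:multiproduct} with $c=0$.

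\emph{Step 2 (substitute the MNL elasticity).} By \eqref{eq:mnl_own}, $E_{kk}=-\beta p_k(1-q_k)$. Setting this equal to $-1$ gives $\beta p_k^*(1-q_k^*)=1$, which rearranges to \eqref{eq:mnl_pstar}. I will stress that this is an implicit, fixed-point characterization, because $q_k^*=q_k(\bp^*)$ itself depends on $p_k^*$.

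\emph{Step 3 (existence, uniqueness, and that it is a maximum).} This is the main obstacle, since Steps 1--2 only give a necessary condition. I will work with $g(p_k)=\ln(p_k q_k)=\ln p_k+\ln q_k$. Its derivative is
\begin{align*}
g'(p_k)=\frac{1}{p_k}-\beta\bigl(1-q_k\bigr).
\end{align*}
The term $1/p_k$ is strictly decreasing in $p_k$. Because $\partial q_k/\partial p_k<0$, the term $\beta(1-q_k)$ is strictly increasing in $p_k$. Hence $g'$ is strictly decreasing. It tends to $+\infty$ as $p_k\downarrow 0$, and it tends to $-\beta<0$ as $p_k\to\infty$, since $q_k\to 0$. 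So $g'$ has exactly one root. That root is the unique global maximizer of $p_k q_k$ on $(0,\infty)$ and coincides with \eqref{eq:mnl_pstar}.

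If the joint objective $\sum_j p_j q_j$ is intended instead, I would note that the cross terms \eqref{eq:mnl_cross} add $\sum_{j\neq k}\beta p_j q_j$ to the condition. In that case \eqref{eq:mnl_pstar} holds under the own-revenue interpretation above, or when $K=1$.
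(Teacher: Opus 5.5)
Your Steps 1--3 correctly prove a different claim. They show that the maximizer of the single-product revenue $p_kq_k(\bp)$, with $p_j$ for $j\neq k$ held fixed, satisfies \eqref{eq:mnl_pstar}. A vector of such prices is a set of mutual best responses (a Bertrand--Nash point). It is not the maximizer of $\bp\cdot\bq$ in \eqref{eq:Max_rev}, which is what ``multi-product optimal price'' means here.

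The paper's proof works with the joint condition \eqref{eq:foc_k} and keeps the cross term $\beta\sum_{j\neq k}p_jq_j$ that you flag in your last paragraph. It arrives at $p_k^*=(1+\beta R^*)/\beta=1/\beta+R^*$. That is the correct joint characterization, but it is not equivalent to \eqref{eq:mnl_pstar}. All prices coincide at the joint optimum, so $R^*=p^*(1-q_0^*)$ with $q_0^*=1-\sum_jq_j^*$. This gives $p_k^*=1/(\beta q_0^*)=1/\bigl(\beta(1-\sum_jq_j^*)\bigr)$, with the total inside share in the denominator rather than $q_k^*$.

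For $K\ge 2$ the displayed formula is therefore false in the paper's setting. Take $K=2$, $\alpha_1=\alpha_2=0$, $\beta=1$. The joint optimum solves $p^*=1+2e^{-p^*}$, so $p^*\approx 1.463$ and $q_k^*\approx 0.158$, whereas $1/(\beta(1-q_k^*))\approx 1.188$. The paper's closing ``equivalently'' never actually reaches \eqref{eq:mnl_pstar}, and no argument can.

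The step of yours that is wrong is the sentence in Step 1 presenting $E_{kk}(\bp^*)=-1$ as the own-price specialization of Proposition~\ref{thm:multiproduct}. That proposition is stationarity of total revenue, $p_kq_k/R+\sum_js_jE_{jk}=0$, and its cross terms cannot be dropped. At the joint MNL optimum, \eqref{eq:mnl_own} gives $E_{kk}=-(1-q_k^*)/q_0^*=-1-\sum_{j\neq k}q_j^*/q_0^*<-1$. Equivalently, your own function satisfies $g'(p_k^*)=\beta q_0^*-\beta(1-q_k^*)=-\beta\sum_{j\neq k}q_j^*<0$. So the joint optimum sits strictly above the own-revenue best response. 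The reason is that raising $p_k$ diverts customers to products whose revenue the joint objective counts but $p_kq_k$ does not.

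Your Step 3 is sound for the problem it actually addresses. To make the proposal correct, say up front that you prove \eqref{eq:mnl_pstar} for the own-revenue (best-response) problem, or for $K=1$. Then replace your final sentence with a plain statement: for the joint objective the formula fails when $K\ge 2$, and the right conclusion is $p_k^*=1/\beta+R^*=1/(\beta q_0^*)$.
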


\begin{proof}
The FOC~\eqref{eq:foc_k} with $c_k=0$ gives $q_k + \sum_j p_j\,\partial q_j/\partial p_k = 0$. Using MNL derivatives:
\[
q_k + p_k(-\beta\,q_k(1-q_k)) + \sum_{j\neq k}p_j\,\beta\,q_j\,q_k = 0\,.
\]
Dividing by $q_k>0$: $1-\beta\,p_k(1-q_k)+\beta\sum_{j\neq k}p_j\,q_j=0$. $1-\beta\,p_k(1-q_k)+\beta\sum_{j\neq k}p_j\,q_j=0$ immediately implies $1-\beta p_k+\beta\sum_{j=1}^Kp_j\,q_j=0$, equivalently, $p^*_k=(1+\beta R)/\beta$.
\end{proof}

\subsection{Bi-level Optimization}
As we elaborated in the introduction, the optimization problem~\eqref{eq:Max_rev}  belongs to the extensively study of domain of bi-level optimization. A general formulation of bi-level optimization takes the following form,
\begin{align}
\label{eqn:bilevelOpt}
\min_{\bx\in \Real^{d_x}} \ell(\bx) := f(\bx, \by^*(\bx)), \quad \text{s.t. } \by^*(\bx) \in \arg\min_{\by\in \Real^{d_y}} g(\bx, \by),
\end{align}
with given functions $f(\bx, \by)$ and $g(\bx,\by)$. Under strong (usually convexity) conditions on the functions, algorithms have been developed and shown to be efficient. For problems with complex inner level structures such as those in~\eqref{eq:Max_rev}, algorithmic development is still quite limited.
For theory, algorithms and other detailed discussion on bi-level optimization, see e.g.~\cite{borkar1997stochastic,zhang2024introduction}.

\subsection{Proof of Theorem~\ref{thm:domain_adaptation}}
\label{sec:Thm_proof}

\begin{proof}[Proof of Theorem~\ref{thm:domain_adaptation}]
In a parameterized learning regime, hypotheses are represented by the optimized parameter. In the case of transformers, they are the weights after the training, which are represented as $\theta^*$. Consider $U$, a neighborhood of $\theta^*$, and $\theta'\in U$
\begin{align*}
e_T(\theta^*) \le & e_T(\theta')+ e_T(\theta^*, \theta') \\ \le & e_T(\theta')+ e_S(\theta^*, \theta')+  |e_T(\theta^*, \theta')-e_S(\theta^*, \theta')|
\\ \le & e_S(\theta^*)+e_T(\theta')+e_S(\theta')+D_U(\pi_S, \pi_T)
\end{align*}
Therefore, we can conclude that 
\begin{align*}
e_T(\theta^*)  \le & e_S(\theta^*)+\g_U+D_U(\pi_S, \pi_T).
\end{align*}
On the other hand, note that, $e_T(\theta^*) =\ex_{(x,y)\sim D_T}[\ell(\bx, \by;\theta^*)]$, and 
\begin{align*}
\ex_{(x,y)\sim D_T}\ell(\bx, \by;\theta^*))&=\ex_{(x,y)\sim D_S}[\ell(\bx, \by;\theta^*)L_{S,T}]\le (1+\delta) \ex_{(x,y)\sim D_S}[\ell(\bx, \by;\theta^*)]\\&=(1+\delta)e_S(\theta^*),
\end{align*}
with the inequality follows from Assumption~\ref{asm:likelihoodRatio}. 
Therefore, we have the desired result.
\end{proof}

\subsection{Effects of Inter-Product Ordering Constraints}
\label{sec:constraintTH}

In discrete choice-based price optimization, a firm offers $K$ substitutable products to heterogeneous customers who select the utility-maximizing option (including a no-purchase alternative). In many settings---airline fare classes, hotel room tiers, subscription plans---business logic requires a price hierarchy among products:
\begin{equation}\label{eq:constraint}
    p_{\pi(k)} \geq p_{\pi(k-1)} + \Delta_k, \quad k = 2, \ldots, K,
\end{equation}
where $\pi$ is a known ordering (e.g., economy $\leq$ business $\leq$ first class) and $\Delta_k \geq 0$ are minimum inter-tier price gaps. When prices are output directly by a neural network, these constraints must be enforced structurally.

\begin{proposition}[Ordering constraints by construction]\label{prop:ordering}
Let $z_1, \ldots, z_K \in \mathbb{R}$ be unconstrained network outputs. Define prices recursively:
\begin{align}
    p_{\pi(1)} &= z_1, \label{eq:base}\\
    p_{\pi(k)} &= p_{\pi(k-1)} + \Delta_k + \operatorname{softplus}(z_k), \quad k = 2, \ldots, K, \label{eq:recursion}
\end{align}
where $\operatorname{softplus}(x) = \log(1 + e^x)$. Then $p_{\pi(k)} > p_{\pi(k-1)} + \Delta_k$ for all $k$ and all $z \in \mathbb{R}^K$.
\end{proposition}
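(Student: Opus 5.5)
The plan is to argue directly from the recursive definition \eqref{eq:recursion}, using only one property of the softplus function: it is strictly positive on all of $\mathbb{R}$. No earlier results of the paper are needed.

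\textbf{Step 1: positivity of softplus.} For every $x \in \mathbb{R}$ we have $e^x > 0$, hence $1 + e^x > 1$. Since $\log$ is strictly increasing, $\operatorname{softplus}(x) = \log(1+e^x) > \log 1 = 0$. This holds without any bound on $x$, which is why the conclusion is valid for all $z \in \mathbb{R}^K$.

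\textbf{Step 2: well-definedness.} By induction on $k$, \eqref{eq:base} and \eqref{eq:recursion} assign a finite real value to $p_{\pi(1)}, \ldots, p_{\pi(K)}$ in turn. Because $\pi$ is a permutation, every price $p_1, \ldots, p_K$ is defined exactly once.

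\textbf{Step 3: the inequality.} Fix $k \in \{2, \ldots, K\}$ and rearrange \eqref{eq:recursion}:
\[
p_{\pi(k)} - p_{\pi(k-1)} - \Delta_k = \operatorname{softplus}(z_k) > 0 .
\]
This is exactly $p_{\pi(k)} > p_{\pi(k-1)} + \Delta_k$. The argument is purely local in $k$ and needs no induction. If desired, one can add the remark that chaining these inequalities gives
\[
p_{\pi(k)} > p_{\pi(j)} + \sum_{i=j+1}^{k} \Delta_i \quad \text{for all } j < k,
\]
and, using $\Delta_i \ge 0$, a strict total ordering of the prices.

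\textbf{Main obstacle.} There is essentially none; the only point requiring care is that the inequality is strict, which rests entirely on Step 1. One could also note that the construction is differentiable in $z$ and surjective onto the open feasible region, but the proposition does not require this.
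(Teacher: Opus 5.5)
Your proof is correct and follows the paper's argument. The paper, in the remark after the proposition, rearranges the recursion to $p_{\pi(k)} - p_{\pi(k-1)} = \Delta_k + \operatorname{softplus}(z_k) > \Delta_k$ using positivity of softplus; you add an explicit check that $\log(1+e^x) > 0$ and note that the inequality only makes sense for $k \ge 2$.
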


\begin{remark}[Reasoning and implications]
From positivity of softplus
we have, $p_{\pi(k)} - p_{\pi(k-1)} = \Delta_k + \operatorname{softplus}(z_k) > \Delta_k$. 
This construction is a standard technique for enforcing positivity in neural network outputs applied here to the multi-product pricing setting.
\end{remark}

\begin{remark}[Limitations]\label{rem:limitations}
While the softplus reparameterization cleanly enforces ordering, it might not be sufficient in the presence of  additional constraints. For example, price range constraints in the forms of $\ell_k \leq p_k \leq u_k, k=1, \ldots K$ will become difficult to enforce given the recursive expansion~\eqref{eq:recursion}. Furthermore, average price constraints $\frac{1}{K}\sum_k p_k \geq \bar{p}$ become non-separable and non-linear in $z$. Moverover, when target price gaps are near-tight ($p_{\pi(k)} - p_{\pi(k-1)} \approx \Delta_k$), the required inverse $z_k = \log(e^{p_{\pi(k)} - p_{\pi(k-1)} - \Delta_k} - 1)$ has the tendency to go to $-\infty$, thus create numerical instability during training. These limitations motivate a dedicated constraint satisfaction layer that operates directly in price space, jointly enforcing bounds, ordering, and aggregate constraints without distorting the network's output representation.
\end{remark}

\newpage
\section{Omitted Details for Experiments}

\subsection{Datasets for Discrete Choice Modeling}
\label{appendix:dcm_datasets}
Several publicly available datasets have been adapted for discrete choice research. In retail and e-commerce, the Amazon product data \citep{ni2019amazon} and Instacart online grocery dataset \citep{instacart2017} provide rich choice signals from millions of user interactions that can be restructured as discrete choice problems. For transportation choices, the Boston Uber/Lyft dataset \citep{boston_rideshare2019} offers fare quotes with contextual features, while the Swissmetro dataset \citep{bierlaire2001swissmetro} remains a benchmark for revealed and stated preference modeling in transportation research. In addition to the simulated datasets, our proposed approach aims to leverage these additional resources to fine-tune the foundation model in order to generalize better across different choice contexts.

\subsection{Synthetic Data Generation (SDG)}
\label{appendix:Synth_Data_Gen}
FMDM can be trained, fine-tuned, and evaluated using several approaches for generating synthetic tabular data. The TabPFN-based generation follows the approach of \citep{ma2023tabpfgen} of using a pretrained transformer to produce class‑logit functions 
$\boldsymbol{\mu}(\bx)$ whose softmax defines $\Pr(y \mid \bx)$.

Beyond TabPFN, we also tested Synthetic Data Vault (SDV) methods that include Gaussian Copulas, CTGAN, TAVE, and bootstrap-based generators.
While these methods are vital for finetuning FMDM with synthesized choice structures from different domains, they do not come with analytically derivable decision optimality conditions. Therefore, we restricted our focus to simulated data from discrete choice models (e.g., MNL), which offer unlimited samples and known optimality conditions for benchmarking, albeit under idealized modeling assumptions.

\subsection{Sample Code to Generate Recommended Price Labels}
\label{appendix::label_gen}

This is a plain-vanilla python implementation of a known method based on ~\cite{gallego2014multiproduct} to enable reproducibility of the controlled experiment with NL and MNL models. Similar algorithms can be used to generate good quality labels for other models such as MMNL, Iso-elastic, and Linear formulations. 
\begin{lstlisting}[caption={Python implementation of Multinomial and Nested Logit pricing}, label={lst:nl-opt-MNL}]
import numpy as np
import math
from typing import Dict, Any

def calculate_prices_nested_logit(alpha: np.ndarray,
                                beta: np.ndarray,
                                nest_assignments: np.ndarray,
                                lam: float,
                                tau_nest: Dict[int, float] = None,
                                tol: float = 1e-10,
                                maxit: int = 10000) -> Dict[str, Any]:
    
    # Generating price recommendations under Multinomial and Nested Logit Models (Based on Gallego-Wang 2014).
    # Optimal for MNL and a fast heuristic for NL
    
    alpha = np.asarray(alpha, dtype=float)
    beta = np.asarray(beta, dtype=float)
    nest_assignments = np.asarray(nest_assignments, dtype=int)
    if np.any(beta >= 0):
        raise ValueError("All beta_i must be negative.")

    N = len(alpha)
    K = len(np.unique(nest_assignments))

    alpha_eff = alpha.copy()
    if tau_nest is not None:
        for m, tau in tau_nest.items():
            idx = np.where(nest_assignments == m)[0]
            alpha_eff[idx] = alpha_eff[idx] * (1.0 + tau)

    def nested_logit_probs_given_prices(p: np.ndarray):
        u = alpha_eff + beta * p
        S = {}
        P_cond = np.zeros(N, dtype=float)
        for m in range(K):
            idx = np.where(nest_assignments == m)[0]
            exp_scaled = np.exp(u[idx] / lam)
            S[m] = exp_scaled.sum()
            P_cond[idx] = exp_scaled / S[m]
        denom = sum(S[m] ** lam for m in range(K)) + 1.0
        P_nest = {m: (S[m] ** lam) / denom for m in range(K)}
        q_outside = 1.0 / denom
        q_inside = np.zeros(N, dtype=float)
        for m in range(K):
            idx = np.where(nest_assignments == m)[0]
            q_inside[idx] = P_nest[m] * P_cond[idx]
        return q_inside, q_outside

    def expected_rev_given_R(R: float):
        p = R - 1.0 / beta
        q_inside, q_outside = nested_logit_probs_given_prices(p)
        ER = float(np.sum(p * q_inside))
        return ER, p, q_inside, q_outside

    R = float(max(1e-6, np.mean(1.0 / (-beta))))
    damping = 0.5
    for _ in range(maxit):
        ER, p, q, q0 = expected_rev_given_R(R)
        R_next = (1.0 - damping) * R + damping * ER
        if abs(R_next - R) <= tol * max(1.0, abs(R)):
            R = R_next
            break
        R = R_next
    else:
        def F(x):
            v, *_ = expected_rev_given_R(x)
            return v - x
        a, bnd = 0.0, max(10.0, R*4.0 + 10.0)
        fa, fb = F(a), F(bnd)
        tries = 0
        while fa*fb > 0 and tries < 60:
            bnd = bnd*2.0 + 10.0
            fb = F(bnd)
            tries += 1
        if fa*fb > 0:
            raise RuntimeError("Failed to bracket R* in NL case.")
        for _ in range(200):
            mid = 0.5*(a+bnd); fm = F(mid)
            if abs(fm) < 1e-12 or abs(bnd - a) < 1e-12:
                R = mid; break
            if fa*fm < 0:
                bnd, fb = mid, fm
            else:
                a, fa = mid, fm

    ER, p, q, q0 = expected_rev_given_R(R)
    return {"price": p, "revenue": ER, "q_in": q, "q_out": q0, "method": "fixed_point"}
\end{lstlisting}

\subsection{Elasticity Prior from the Literature}
\label{appendix::label_elas}

\begin{table}[h]
\centering
\caption{Sample of representative price elasticity of demand values.  See \cite{babic2005microeconomics} and \cite{mankiw1998principles} for details.}
\label{tab:elasticities-pr}
\begin{tabular}{l c}
\hline
\textbf{Product / Category} & \textbf{Elasticity} \\
\hline
Water (basic consumption)   & $-0.1$ \\
Electricity (short-run)     & $-0.1$ to $-0.3$ \\
Gasoline (short-run)        & $-0.2$ to $-0.3$ \\
Milk                        & $-0.5$ \\
Gasoline (long-run)         & $-0.6$ to $-0.8$ \\
Electricity (long-run)      & $-0.7$ \\
Restaurant meals            & $-2.3$ \\
Air travel (long-run)       & $-2.0$ \\
Luxury goods                & $-1.5$ to $-3.0$ \\
Automobiles (long-run)      & $-1.2$ to $-1.5$ \\

\hline
\end{tabular}
\end{table}

\begin{lstlisting}[caption={Python example of a prompt to obtain an elasticity prior range}, label={lst:nl-opt}]

    prompt = (
        f"<|system|>You are an economics professor who teaches from Mankiw's "
        f"Principles of Economics and Pindyck & Rubinfeld's Microeconomics.</s>"
        f"<|user|>Price elasticity of demand (e) is negative for normal goods. "
        f"INELASTIC means |e| < 1 (between -1.0 and 0): gasoline $e\sim -0.3$, "
        f"bread $e\sim -0.2$, cigarettes $e\sim -0.4$. "
        f"ELASTIC means |e| > 1 (more negative than -1.0): restaurant meals "
        f"$e\sim -1.6$, vacations $e\sim -2.5$, new cars $e\sim -1.5$.\n\n"
        f"Is '{product_string}' typically elastic or inelastic based on "
        f"economics textbooks? "
        f"Answer with exactly one word: 'elastic' or 'inelastic'.</s>"
        f"<|assistant|>"
    )
\end{lstlisting}

\subsection{Ablation Studies Tables}
This section reports detailed dataset-level experimental results that are summarized in the main section. Across these datasets, we test the model on more than 600,000 simulated price points, and more than 300,000 price points drawn from real-world data. The underlying discrete choice models are diverse and include logit, neural network, and tree-ensemble based choice structures, some of which are not seen during training. 

\label{appendix::label_ablation}

\begin{table}[ht]
	\centering
	\small
	\caption{C3PO model performance. PDR/PIR/BR shown as Predicted/Optimal.}
	\label{tab:baseline}
	\begin{tabular}{lcccc}
		\toprule
		Data set & MAE & PDR & PIR & BR \\
		\midrule
		B2B & 0.2723 & 0.61/0.51 & 0.67/0.41 & 0.11/0.00 \\
		SIMULATED & 0.4150 & 0.49/0.48 & 0.43/0.61 & 0.00/0.00 \\
		%AIR & 0.2590 & 0.57/0.50 & 0.46/0.45 & 0.00/0.00 \\
		DVD & 0.4742 & 0.43/0.52 & 0.53/0.63 & 0.14/0.00 \\
		HOTEL & 0.2603 & 0.41/0.43 & 0.53/0.61 & 0.03/0.00 \\
		AIR9 & 0.3321 & 0.51/0.51 & 0.50/0.50 & 0.00/0.00 \\
		SWISS & 0.4632 & 0.64/0.42 & 0.41/0.62 & 0.00/0.00 \\
		%ELEC & $\cdot$ &  $\cdot$/$\cdot$ & $\cdot$/$\cdot$ & $\cdot$/$\cdot$ \\
		YOGURT & 0.5531 & 0.24/0.47 & 0.82/0.50 & 0.52/0.00 \\
		\bottomrule
	\end{tabular}
\end{table}
\begin{table}[ht]
	\centering
	\small
	\begin{tabular}{lccccc}
		\toprule
		Data set & K & N & Win Rate & Prior Elasticty & Choice Model \\
		\midrule
		SIMULATED & 6 & 633600 & 0.2351 & (-3.00, -0.50) & MULTIPLE \\
		B2B & 1 & 313 & 0.3419 & (-2.00, -1.00) & CATBOOST \\
		%AIR & 2 & 3200 & 0.1269 & (-0.70, -0.30) & LIGHTGBM \\
		DVD & 15 & 150000 & 0.4442 & (-1.50, -0.50) & MMNL \\
		HOTEL & 6 & 60000 & 0.9086 & (-2.50, -1.50) & MNL \\
		AIR9 & 2 & 80000 & 0.1378 & (-2.50, -0.50) & LIGHTGBM \\
		SWISS & 3 & 27357 & 0.7508 & (-0.60, -0.20) & TASTENET-MNL \\
		%ELEC & $\cdot$ & $\cdot$ & $\cdot$ & $\cdot$ & $\cdot$ \\
		YOGURT & 4 & 3212 & 0.9639 & (-1.50, -0.50) & MMNL \\
		\bottomrule\\
	\end{tabular}
	\caption{Dataset characteristics and metadata.}
	\label{tab:meta}
\end{table}

\begin{table}[h]
	\centering
	\small
	\caption{IMITATION-ONLY ablation results. PDR/PIR/BR shown as Scenario/Baseline.}
	\label{tab:imitation_only}
	\begin{tabular}{lcccc}
		\toprule
		Eval & MAE & PDR & PIR & BR \\
		\midrule
		SIMULATED & 0.4211 & 0.54/0.49 & 0.35/0.43 & 0.00/0.00 \\
		B2B & 0.3279 & 0.90/0.61 & 0.17/0.67 & 0.00/0.11 \\
		%AIR & 0.3315 & 0.83/0.57 & 0.21/0.46 & 0.00/0.00 \\
		DVD & 0.4181 & 0.45/0.43 & 0.54/0.53 & 0.05/0.14 \\
		HOTEL & 0.2977 & 0.25/0.41 & 0.70/0.53 & 0.18/0.03 \\
		AIR9 & 0.2383 & 0.49/0.51 & 0.49/0.50 & 0.00/0.00 \\
		SWISS & 0.3633 & 0.68/0.64 & 0.37/0.41 & 0.00/0.00 \\
		YOGURT & 0.4046 & 0.59/0.24 & 0.49/0.82 & 0.00/0.52 \\
		\bottomrule
	\end{tabular}
\end{table}

\begin{table}[h]
	\centering
	\small
	\caption{ICL-OFF ablation results. PDR/PIR/BR shown as Scenario/Baseline.}
	\label{tab:icl_off}
	\begin{tabular}{lcccc}
		\toprule
		Eval & MAE & PDR & PIR & BR \\
		\midrule
		SIMULATED & 0.7430 & 0.15/0.49 & 0.87/0.43 & 0.62/0.00 \\
		B2B & 0.8009 & 0.07/0.61 & 1.00/0.67 & 1.03/0.11 \\
		%AIR & 0.6063 & 0.03/0.57 & 0.97/0.46 & 0.79/0.00 \\
		DVD & 0.8208 & 0.46/0.43 & 0.53/0.53 & 0.16/0.14 \\
		HOTEL & 0.4262 & 0.31/0.41 & 0.68/0.53 & 0.30/0.03 \\
		AIR9 & 0.7096 & 0.02/0.51 & 0.96/0.50 & 1.53/0.00 \\
		SWISS & 0.5361 & 0.42/0.64 & 0.62/0.41 & 0.27/0.00 \\
		YOGURT & 0.6564 & 0.12/0.24 & 0.88/0.82 & 1.30/0.52 \\
		\bottomrule
	\end{tabular}
\end{table}
\begin{table}[h]
	\centering
	\small
	\caption{MINIMAL-ICL ablation results. PDR/PIR/BR shown as Scenario/Baseline.}
	\label{tab:simple_icl}
	\begin{tabular}{lcccc}
		\toprule
		Eval & MAE & PDR & PIR & BR \\
		\midrule
		SIMULATED & 0.4135 & 0.51/0.49 & 0.41/0.43 & 0.00/0.00 \\
		B2B & 0.3363 & 0.31/0.61 & 0.88/0.67 & 0.38/0.11 \\
		%AIR & 0.3259 & 0.51/0.57 & 0.51/0.46 & 0.06/0.00 \\
		DVD & 0.4115 & 0.50/0.43 & 0.45/0.53 & 0.00/0.14 \\
		HOTEL & 0.3316 & 0.36/0.41 & 0.60/0.53 & 0.17/0.03 \\
		AIR9 & 0.3552 & 0.49/0.51 & 0.53/0.50 & 0.16/0.00 \\
		SWISS & 0.4690 & 0.63/0.64 & 0.42/0.41 & 0.00/0.00 \\
		%ELEC &  & / & / & / \\
		YOGURT & 0.5702 & 0.15/0.24 & 0.88/0.82 & 0.89/0.52 \\
		\bottomrule
	\end{tabular}
\end{table}
\begin{table}[h]
	\centering
	\small
	\caption{PRIOR-OFF ablation results. PDR/PIR/BR shown as Scenario/Baseline.}
	\label{tab:prior_off}
	\begin{tabular}{lcccc}
		\toprule
		Eval & MAE & PDR & PIR & BR \\
		\midrule
		SIMULATED & 0.4186 & 0.48/0.49 & 0.47/0.43 & 0.00/0.00 \\
		B2B & 0.3036 & 0.35/0.61 & 0.87/0.67 & 0.33/0.11 \\
		%AIR & 0.3272 & 0.52/0.57 & 0.49/0.46 & 0.00/0.00 \\
		DVD & 0.4553 & 0.62/0.43 & 0.32/0.53 & 0.00/0.14 \\
		HOTEL & 0.3465 & 0.36/0.41 & 0.61/0.53 & 0.19/0.03 \\
		AIR9 & 0.3796 & 0.44/0.51 & 0.58/0.50 & 0.37/0.00 \\
		SWISS & 0.4685 & 0.64/0.64 & 0.41/0.41 & 0.00/0.00 \\
		%ELEC & $\cdot$ & $\cdot$/$\cdot$ & $\cdot$/$\cdot$ & $\cdot$/$\cdot$ \\
		YOGURT & 0.5798 & 0.15/0.24 & 0.89/0.82 & 0.93/0.52 \\
		\bottomrule
	\end{tabular}
\end{table}

\begin{table}[h]
\centering
\small
\caption{Constraint violations: Baseline (unconstrained predictions).}
\label{tab:constr_baseline_results}
\begin{tabular}{ll rrr}
	\toprule
	Eval & Constraint & Abs Mean $\pm$ Std & Abs Max & \% Mean $\pm$ Std \\
	\midrule
	%  AIR & Bound & 0.00 $\pm$ 0.00 & 0.00 & 0.0 $\pm$ 0.0\% \\
	%   & Order & 0.00 $\pm$ 0.00 & 0.00 & 0.0 $\pm$ 0.0\% \\
	%   & AvgPrc & 0.00 $\pm$ 0.00 & 0.00 & 0.0 $\pm$ 0.0\% \\
	\addlinespace
	AIR9 & Bound & 0.07 $\pm$ 0.02 & 0.22 & 3.3 $\pm$ 1.0\% \\
	& Order & 0.00 $\pm$ 0.00 & 0.00 & 0.0 $\pm$ 0.0\% \\
	& AvgPrc & 0.00 $\pm$ 0.00 & 0.00 & 0.0 $\pm$ 0.0\% \\
	\addlinespace
	B2B & Bound & 0.00 $\pm$ 0.00 & 0.00 & 0.0 $\pm$ 0.0\% \\
	& Order & 0.00 $\pm$ 0.00 & 0.00 & 0.0 $\pm$ 0.0\% \\
	& AvgPrc & 0.25 $\pm$ 0.21 & 0.82 & 17.9 $\pm$ 15.5\% \\
	\addlinespace
	DVD & Bound & 1.21 $\pm$ 0.11 & 1.70 & 60.7 $\pm$ 5.7\% \\
	& Order & 0.35 $\pm$ 0.01 & 1.64 & 45.6 $\pm$ 1.1\% \\
	& AvgPrc & 0.08 $\pm$ 0.01 & 0.13 & 10.7 $\pm$ 1.5\% \\
	\addlinespace
	HOTEL & Bound & 0.00 $\pm$ 0.00 & 0.00 & 0.0 $\pm$ 0.0\% \\
	& Order & 0.19 $\pm$ 0.01 & 1.10 & 15.4 $\pm$ 0.6\% \\
	& AvgPrc & 0.00 $\pm$ 0.00 & 0.00 & 0.0 $\pm$ 0.0\% \\
	\addlinespace
	SIMULATED & Bound & 0.00 $\pm$ 0.00 & 0.16 & 0.0 $\pm$ 0.0\% \\
	& Order & 0.23 $\pm$ 0.01 & 1.10 & 21.9 $\pm$ 1.2\% \\
	& AvgPrc & 0.21 $\pm$ 0.04 & 0.77 & 19.5 $\pm$ 3.7\% \\
	\addlinespace
	SWISS & Bound & 0.02 $\pm$ 0.08 & 0.45 & 1.0 $\pm$ 3.8\% \\
	& Order & 0.44 $\pm$ 0.03 & 1.19 & 52.4 $\pm$ 14.5\% \\
	& AvgPrc & 0.00 $\pm$ 0.00 & 0.02 & 0.0 $\pm$ 0.1\% \\
	\addlinespace
	YOGURT & Bound & 0.25 $\pm$ 0.36 & 1.94 & 12.4 $\pm$ 17.9\% \\
	& Order & 0.00 $\pm$ 0.00 & 0.00 & 0.0 $\pm$ 0.0\% \\
	& AvgPrc & 0.00 $\pm$ 0.00 & 0.00 & 0.0 $\pm$ 0.0\% \\
	\bottomrule
\end{tabular}
\end{table}
\begin{table}[h]
	\centering
	\small
	\caption{Constraint violations: Constrained predictions.}
	\label{tab:constr_on_final}
	\begin{tabular}{ll rrr}
		\toprule
		Eval & Constraint & Abs Mean $\pm$ Std & Abs Max & \% Mean $\pm$ Std \\
		\midrule
		%  AIR & Bound & 0.00 $\pm$ 0.00 & 0.00 & 0.0 $\pm$ 0.0\% \\
		%   & Order & 0.02 $\pm$ 0.02 & 0.07 & 2.5 $\pm$ 3.4\% \\
		%   & AvgPrc & 0.00 $\pm$ 0.00 & 0.00 & 0.0 $\pm$ 0.0\% \\
		\addlinespace
		AIR9 & Bound & 0.00 $\pm$ 0.00 & 0.00 & 0.0 $\pm$ 0.0\% \\
		& Order & 0.08 $\pm$ 0.01 & 0.18 & 18.6 $\pm$ 1.3\% \\
		& AvgPrc & 0.00 $\pm$ 0.00 & 0.00 & 0.0 $\pm$ 0.0\% \\
		\addlinespace
		B2B & Bound & 0.00 $\pm$ 0.00 & 0.00 & 0.0 $\pm$ 0.0\% \\
		& Order & 0.00 $\pm$ 0.00 & 0.00 & 0.0 $\pm$ 0.0\% \\
		& AvgPrc & 0.00 $\pm$ 0.00 & 0.00 & 0.0 $\pm$ 0.0\% \\
		\addlinespace
		DVD & Bound & 0.50 $\pm$ 0.13 & 1.03 & 25.0 $\pm$ 6.5\% \\
		& Order & 0.29 $\pm$ 0.01 & 1.56 & 47.9 $\pm$ 1.1\% \\
		& AvgPrc & 0.13 $\pm$ 0.01 & 0.16 & 16.8 $\pm$ 1.2\% \\
		\addlinespace
		HOTEL & Bound & 0.00 $\pm$ 0.00 & 0.00 & 0.0 $\pm$ 0.0\% \\
		& Order & 0.02 $\pm$ 0.00 & 0.10 & 2.3 $\pm$ 0.4\% \\
		& AvgPrc & 0.00 $\pm$ 0.00 & 0.01 & 0.0 $\pm$ 0.0\% \\
		\addlinespace
		SIMULATED & Bound & 0.03 $\pm$ 0.02 & 0.73 & 1.4 $\pm$ 0.9\% \\
		& Order & 0.01 $\pm$ 0.00 & 0.16 & 0.4 $\pm$ 0.2\% \\
		& AvgPrc & 0.06 $\pm$ 0.01 & 0.65 & 5.4 $\pm$ 0.9\% \\
		\addlinespace
		SWISS & Bound & 0.00 $\pm$ 0.00 & 0.00 & 0.0 $\pm$ 0.0\% \\
		& Order & 0.01 $\pm$ 0.01 & 0.12 & 1.1 $\pm$ 2.1\% \\
		& AvgPrc & 0.02 $\pm$ 0.02 & 0.11 & 4.3 $\pm$ 4.7\% \\
		\addlinespace
		YOGURT & Bound & 0.00 $\pm$ 0.00 & 0.00 & 0.0 $\pm$ 0.0\% \\
		& Order & 0.05 $\pm$ 0.03 & 0.46 & 5.3 $\pm$ 2.3\% \\
		& AvgPrc & 0.00 $\pm$ 0.00 & 0.00 & 0.0 $\pm$ 0.0\% \\
		\bottomrule
	\end{tabular}
\end{table}

\subsection{Training Data Generation and Loss Plots}
\label{appendix::label_training_data_and_plots}

The B2B experiment (B2B) represents a fundamental discrete choice model. Its experimental details are discussed in the next section \ref{appendix::b2b_details}. The airline experiment (AIR9) relies on anonymized booking data derived from one of the largest global legacy airline carriers. 
The DVD and HOTEL datasets are derived from the discrete-choice literature. The DVD dataset in particular, tests the model's scalability in terms of generating prices for a much larger set of choices than what it was trained on. The last two datasets represent a 'stress test' performance in a low-elasticity setting. For each domain, we provide 1,600 ICL examples with elasticity priors drawn from the literature.

The model is trained sequentially on each dataset using a loss function that minimizes a weighted combination of imitation loss, revenue loss, and elasticity loss terms, with a learning rate of $10^{-4}$,AdamW optimizer, smooth-L1 and ReLU loss functions, $d_{\text{model}}=32$, a batch size of 128, and $30\%$ of the rows in each batch are used as examples to enable in-context learning. The weights for the key training loss and regularization terms can be tuned through hyper-parameter optimization (HPO), but we chose these simple settings: price: 1.0, revenue: 0.25, elasticity: 0.25, anchor: 0.25, elasticity prior: $0.75$, and the weight for constraint: 2.0.
The resultant number of trainable parameters is about 337,000.
Motivated by the Lerner condition (\cite{anderson1992discrete}), we include an elasticity-anchor loss term with a weight of 0.25 to encourage the model to find a solution that has an elasticity value close to -1.0 at the predicted price (Appendix section \ref{appendix::label_elas_basics} presents a detailed elasticity analysis).
%TODO give network param sizes in appendix 
The overall training time is less than an hour on a Windows Intel 20-core laptop with a NVIDIA RTX4000 GPU and 64GB RAM. 

  Customer heterogeneity is simulated by incorporating seven binary customer attributes for each discrete-choice dataset, which yields $2^7$ different customer segments per dataset. The choice cardinality varied between $[2, 6]$. The average price elasticity of the simulated product choices range between (-3.0, -1.0), reflecting mostly price-elastic settings. Note the choices with elasticity less than $-1.0$ are classified as price elastic, whereas elasticity values in the interval $(-1, 0)$ indicate weakly elastic products~\cite{samuelsoneconomics}. Given this training data choice, the C3PO is suited for price-elastic purchase settings where the optimal prices typically lie in the interior of the bounds and do not simply drift to the upper bound, which is often an optimal choice in a weakly elastic setting.

For training, we additionally incorporate information about the feasible space of the underlying stochastic optimization problem through a flattened matrix of counterfactual ``what-if'' outcomes generated from multiple feasible price vectors. These outcomes are produced using the underlying choice models in our controlled experiments.
Specifically, for each data row, we generate 50 random price vectors per segment and record the corresponding expected revenues. We generate one such row per segment, yielding a total of $128$ rows per dataset. Each row contains the recommended price vectors corresponding to the given discrete-choice parameter settings. As a result, the contextual features and recommendation for each row take the form $(\boldsymbol{z},\, 50 \times (\boldsymbol{p}, r^*),\, \boldsymbol{y})$, with $L + 50 \times (K + 1) + K$ columns.

We generate prices $\bp$ for the 'what-if' data by sampling from a normal distribution with mean and standard deviation equal to $1.0$, and clip the sampled values to lie within the interval $[0, 2]$. This design restricts prices to a narrow range around the mean, reflecting real-world pricing distributions that typically arise from limited experimentation. Prior to predicting prices in a new domain, we scale the contextual data, the prices and revenues by the maximum value seen in the ICL dataset.

\newpage
We now describe how to generate recommended prices for in-Context learning and training. Note that the prediction targets are the optimal actions of the product/service provider that maximize a seller's expected revenue or profit margin, under the assumption that customers respond to the seller’s actions through their discrete choice behavior over the offered products. For known choice models, such as the nested logit (NL) and multinomial logit (MNL) models, we compute the recommended prices {$\bp_i=\bp^{\circ}(\bx_n,\bz_i)$} for each customer segment $i$ using the procedure based on the optimality conditions described in~\cite{gallego2014multiproduct}. For robustness, we incorporate additional fallback logic and numerical safeguards. The Python implementation is provided in Appendix~\ref{appendix::label_gen}.

\begin{figure}[h]
    \centering
    \includegraphics[width=0.7\linewidth]{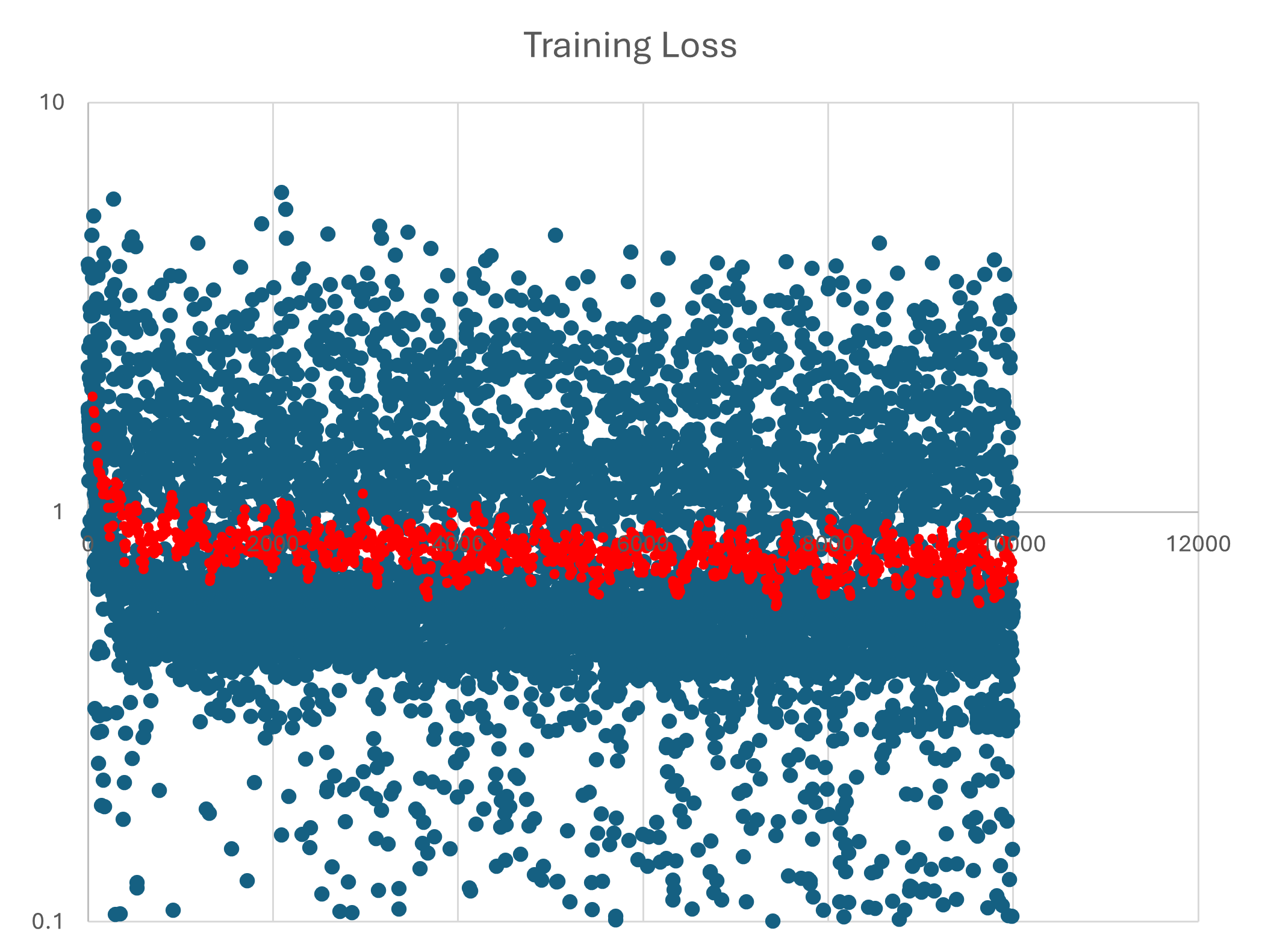}
    \caption{Revenue-loss term as a function of dataset count, including a log-scaled y-axis and a 50-point moving average.}
    \label{fig:loss-vs-dataset}
\end{figure}

\begin{figure}[h]
    \centering
    \includegraphics[width=0.7\linewidth]{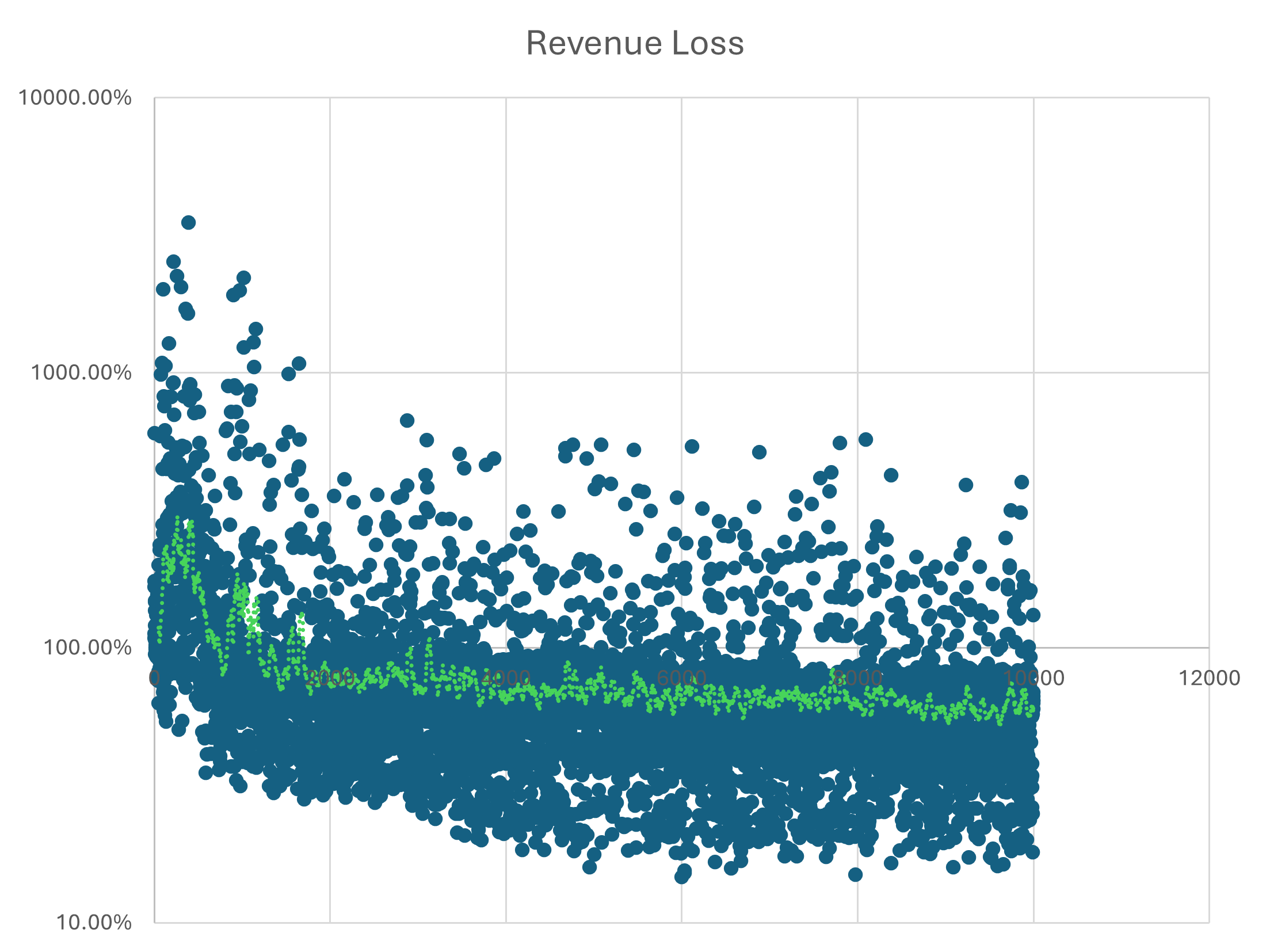}
    \caption{Training loss as a function of dataset count, including a log-scaled y-axis and a 50-point moving average.}
    \label{fig:revloss-vs-dataset}
\end{figure}

\begin{figure}[h]
    \centering
    \includegraphics[width=0.8\linewidth]{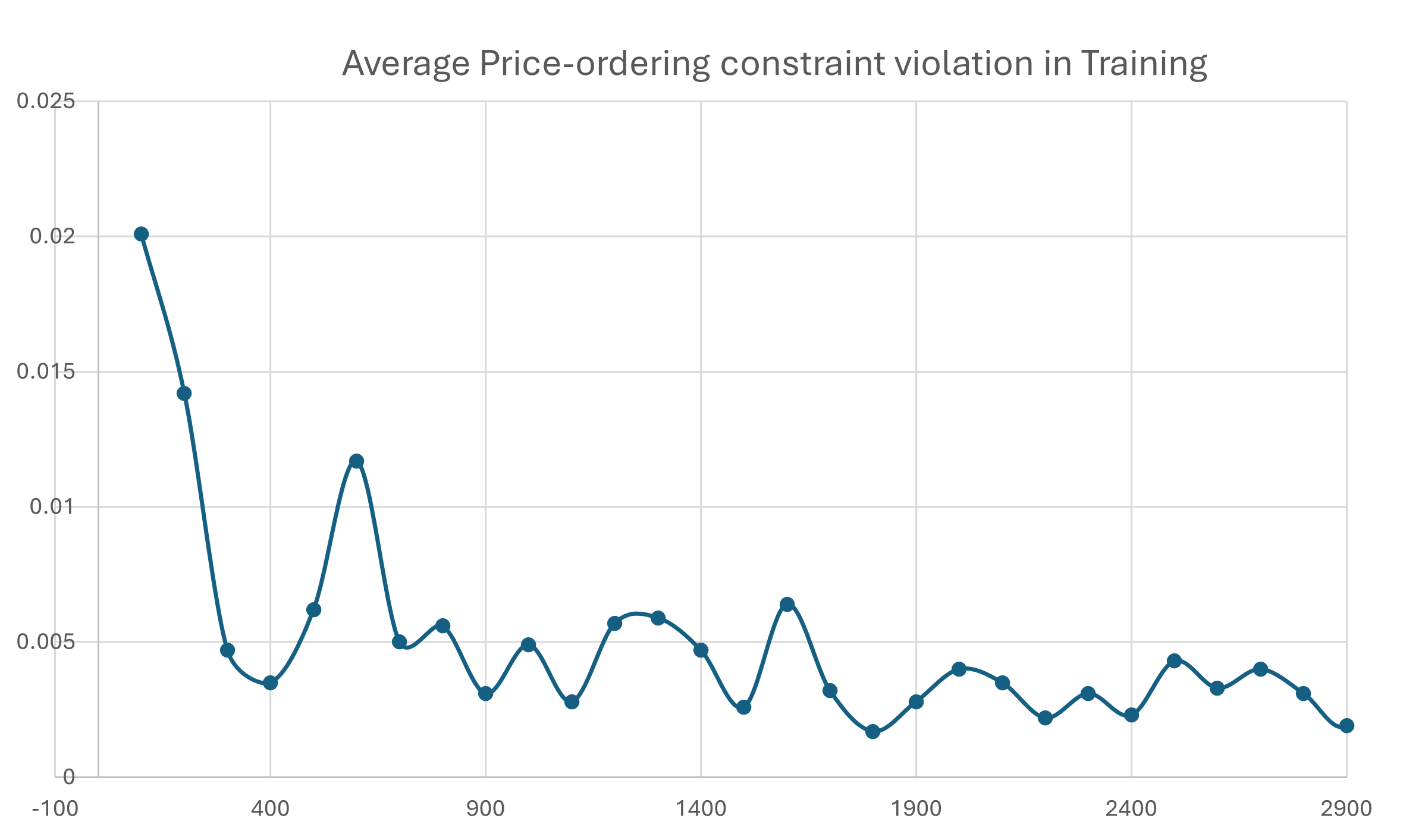}
    \caption{Average price ordering constraint violation as a function of dataset count, including a linear y-axis and a 50-point moving average.}
    \label{fig:constraintloss-vs-dataset}
\end{figure}
\newpage

\begin{figure}[htbp!]
    \centering
    \begin{subfigure}[b]{0.4\textwidth}
        \centering
        \includegraphics[width=\linewidth]{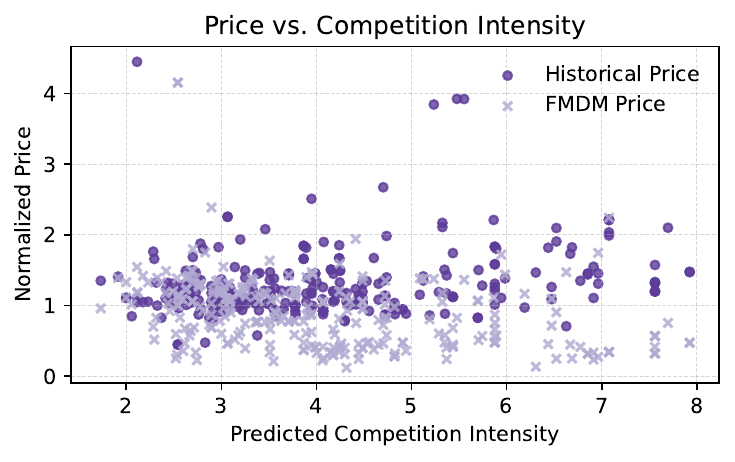}
        \caption{Scatter plot of historical and recommended normalized prices versus predicted competition intensity;}
        \label{fig:price_competition}
    \end{subfigure}
    \qquad
    \begin{subfigure}[b]{0.4\textwidth}
        \centering
        \includegraphics[width=\linewidth]{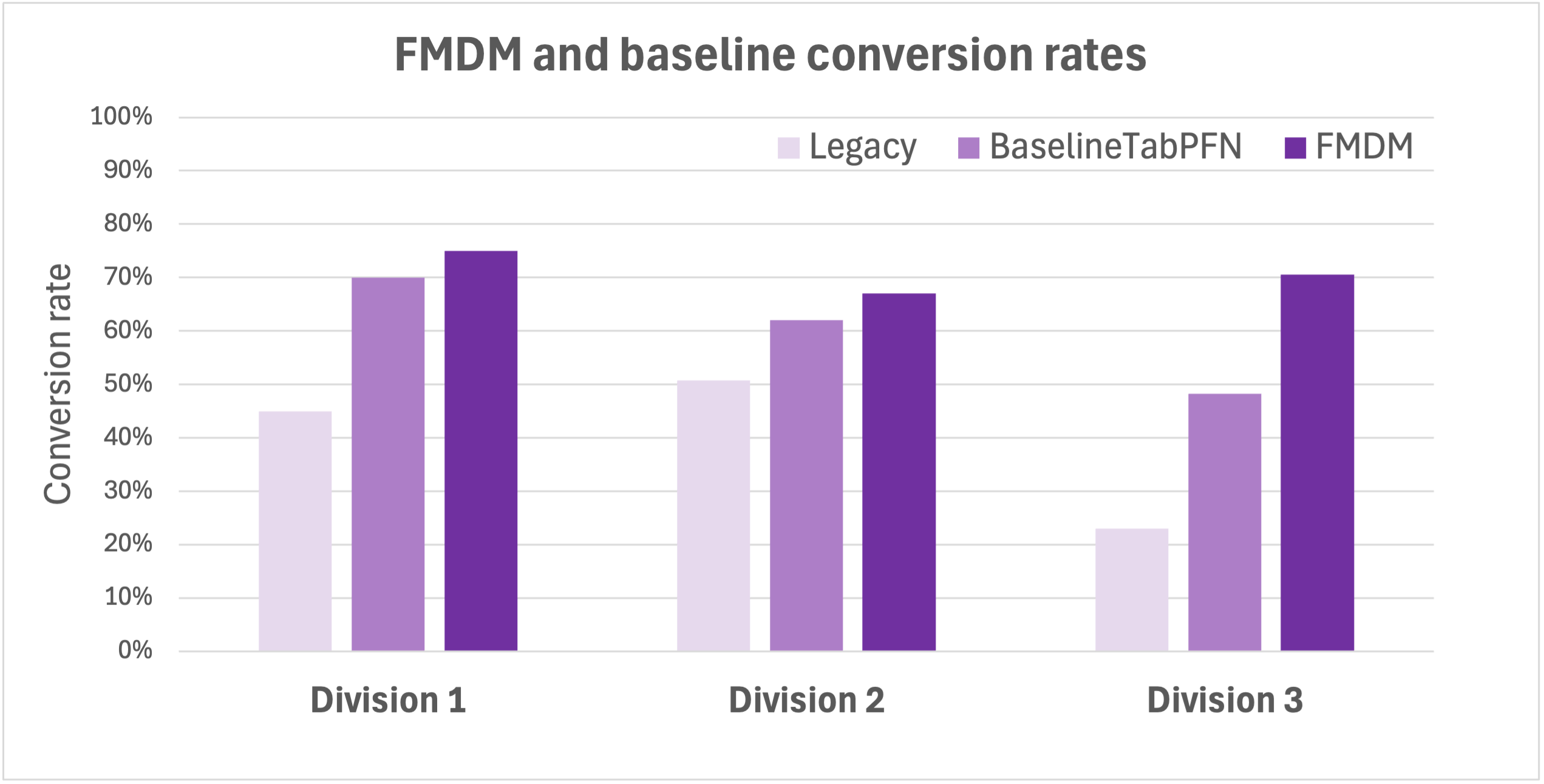}
        \caption{Bar graph comparison of win-rates for legacy and TabPFN-baseline prices by product division.}
        \label{fig:fmb-win-rate}
    \end{subfigure}
\end{figure}
\newpage

When customer preferences are represented through complex nonlinear interactions or learned via opaque, data-driven classification models for discrete choice, the optimization algorithm described above can only be used as a heuristic procedure. In practical applications, we can adopt the column-generation-based policy optimization approach proposed in~\cite{subramanian2022constrained} to construct near-optimal pricing policies that satisfy a broad range of operational, regulatory, fairness, and business constraints. 
\newpage

The resulting training dynamics are illustrated in Figure~\ref{fig:loss-vs-dataset}, which reports the total training loss values along with a 50-iteration moving average to highlight the mean convergence trajectory. The reduction in the revenue loss component and the average price-ordering constraint violation (when the constraint layer is active) during training is shown in figure A1, A2.

\subsection{Detailed discussion of B2B data set results}
\label{appendix::b2b_details}

The responsiveness of the FMDM (TabPFN-FT baseline) prices to competition is captured in Figure~\ref{fig:price_competition}, which shows the normalized legacy and FMDM prices versus the predicted intensity of competition. 
The results show that the FMDM adjusts its recommendations based on the expected level of competition, with prices dropping sharply as competitive intensity rises. In contrast, legacy prices remain largely unchanged and even slightly higher in some regions. 
The stochastic optimal policy recommendations (not shown for clarity) trends below the legacy prices but decreases mildly as competition intensity increases. 

The FMDM’s pricing behavior aligns with auction theory \citep{milgrom2004putting}, which predicts that as competition intensifies, the distribution of lowest bids shifts sharply downward, requiring bidders to reduce their prices in a nonlinear fashion to maintain a viable chance of winning.

As far as BR, the baseline prices which tend to be on the lower side exhibit the highest booking regret in Table ~\ref{tab:tender-kpi}, while the optimal policy has a lower but non-zero booking regret. TabPFN-FT's BR is zero, attesting to its ability to boost both PDR and PIR.

Finally, Figure \ref{fig:fmb-win-rate} reports the estimated win rates for TabPFN-FT relative to both the baseline TabPFN model and the client’s legacy business‑as‑usual pricing performance. The results indicate that TabPFN-FT surpasses both benchmarks. By recommending more substantial price reductions on losing bids and avoiding large price increases on winning bids, TabPFN-FT achieves consistently higher estimated win rates across all three divisions analyzed in this study.

\end{document}